\documentclass[10pt]{article} 
\usepackage[preprint]{tmlr}


\usepackage{amsmath,amsfonts,bm}









\def\eqref#1{equation~\ref{#1}}









\def\1{\bm{1}}










\DeclareMathAlphabet{\mathsfit}{\encodingdefault}{\sfdefault}{m}{sl}
\SetMathAlphabet{\mathsfit}{bold}{\encodingdefault}{\sfdefault}{bx}{n}













\usepackage{hyperref}
\usepackage{url}

\usepackage{tikz}
\usetikzlibrary{positioning, calc, arrows.meta, backgrounds}
\usepackage{amsmath}
\usepackage{amssymb}

\definecolor{primaryblue}{RGB}{31, 119, 180}
\definecolor{secondaryblue}{RGB}{54, 144, 192}
\definecolor{accentgray}{RGB}{96, 125, 139}

\usepackage{hyperref}
\usepackage{url}
\usepackage{amsthm}

\usepackage{amsmath}
\usepackage{amssymb}
\usepackage{mathrsfs}
\usepackage{mathtools}
\usepackage{tikz-cd}

\newcommand{\cat}[1]{\mathsf{#1}}  
\newcommand{\Ob}{\mathsf{Ob}}      
\newcommand{\att}{\mathsf{att}} 
\newcommand{\AttP}{\mathsf{AttP}}  

\theoremstyle{definition} 
\newtheorem{definition}{Definition}[section]
\newtheorem{remark}{Remark}[section]

\theoremstyle{plain} 
\newtheorem{theorem}{Theorem}[section]


\title{Self-Attention as a Parametric Endofunctor: A Categorical Framework for Transformer Architectures}


\author{Charles O'Neill \\
School of Computing\\
Australian National University\\
\texttt{charles.oneill@anu.edu.au} \\
}



\begin{document}

\maketitle

\begin{abstract}
Self-attention mechanisms have revolutionised deep learning architectures, yet their core mathematical structures remain incompletely understood. In this work, we develop a category-theoretic framework focusing on the linear components of self-attention. Specifically, we show that the query, key, and value maps naturally define a parametric 1-morphism in the 2-category \(\cat{Para}(\cat{Vect})\). On the underlying 1-category \(\cat{Vect}\), these maps induce an endofunctor whose iterated composition precisely models multi-layer attention. We further prove that stacking multiple self-attention layers corresponds to constructing the free monad on this endofunctor. For positional encodings, we demonstrate that strictly additive embeddings correspond to monoid actions in an affine sense, while standard sinusoidal encodings, though not additive, retain a universal property among injective (faithful) position-preserving maps. We also establish that the linear portions of self-attention exhibit natural equivariance to permutations of input tokens, and show how the ``circuits'' identified in mechanistic interpretability can be interpreted as compositions of parametric 1-morphisms. This categorical perspective unifies geometric, algebraic, and interpretability-based approaches to transformer analysis, making explicit the underlying structures of attention. We restrict to linear maps throughout, deferring the treatment of nonlinearities such as softmax and layer normalisation, which require more advanced categorical constructions. Our results build on and extend recent work on category-theoretic foundations for deep learning, offering deeper insights into the algebraic structure of attention mechanisms.
\end{abstract}

\section{Introduction}

Transformers have become the dominant architecture for a wide range of applications in natural language processing, computer vision, and beyond \citep{vaswani2017attention}. Their remarkable empirical success has outpaced our theoretical understanding, prompting numerous efforts to develop rigorous foundations that explain their expressive power, guide principled design choices, and deepen interpretability. Category theory — an abstract mathematical framework for describing structures and their relationships — has emerged as a powerful language for unifying disparate views of deep learning, from tensor programming paradigms \citep{abadi2016tensorflow, paszke2019pytorch} to symmetry-based approaches in geometric deep learning \citep{bronstein2021geometric, cohen2018spherical, weiler2023equivariant}.

Despite significant advances in understanding various aspects of transformers, the mathematical foundations of their core component — the self-attention mechanism — remain incompletely understood. While geometric approaches have illuminated symmetry properties and interpretability methods have revealed computational ``circuits,'' we lack a single unifying framework that shows precisely how these perspectives connect. This gap limits our ability to reason systematically about transformer architectures and their fundamental properties.

\begin{figure}[htb]
    \centering
    \begin{tikzpicture}[
        font=\sffamily\scriptsize,
        node distance=2cm,
        >=latex,
        shorten >=1pt,
        shorten <=1pt,
        on grid,
        auto
    ]
    \tikzstyle{block} = [
        draw=primaryblue,
        thick,
        rectangle,
        rounded corners=3pt,
        align=center,
        minimum width=1.8cm,
        minimum height=1.2cm,
        fill=white,
        text=black
    ]
    \tikzstyle{bigblock} = [
        draw=primaryblue,
        thick,
        rectangle,
        rounded corners=4pt,
        align=center,
        minimum width=2.8cm,
        minimum height=2.0cm,
        fill=white,
        text=black
    ]
    \tikzstyle{monoid} = [
        draw=primaryblue,
        thick,
        circle,
        minimum size=0.8cm,
        fill=white
    ]
    
    \begin{scope}
        \node[monoid] (monoidM) {$M$};
        \node[text width=2cm, align=center, above=0.8cm of monoidM, text=accentgray] {Positional Monoid};
        \node[block, right=3cm of monoidM] (posEnc) {
            Positional\\Encodings\\[3pt]
            \(\,\displaystyle p: \mathbf{BM}\!\to\!\cat{Vect}\,\)
        };
        
        \node[bigblock, 
            label={[align=center, text=accentgray]above:\textbf{Self-Attention}\\(Parametric Endofunctor)},
            right=4.5cm of posEnc,
            minimum height=3.8cm] (selfAttn) {};
            
        \node[block, minimum width=1.2cm, minimum height=0.9cm]
            (Q) at ($(selfAttn.center) + (0, 1.2cm)$) {\(\mathit{Q}\)};
        \node[block, minimum width=1.2cm, minimum height=0.9cm]
            (K) at ($(selfAttn.center) + (0, 0.0cm)$) {\(\mathit{K}\)};
        \node[block, minimum width=1.2cm, minimum height=0.9cm]
            (V) at ($(selfAttn.center) + (0, -1.2cm)$) {\(\mathit{V}\)};
            
        \node[block, right=3cm of selfAttn] (XnUpdated) {
            \(\mathrm{Att}(X^n)\)\\(New \\Embeddings)
        };
        
        \node[bigblock,
            label={[align=center, text=accentgray]above:Stacked Self-Attn\\(Free Monad on \(\,F\))},
            minimum width=2.6cm,
            minimum height=1.2cm,
            right=3.5cm of XnUpdated] (stacked) {
            \(\,F \circ F \circ \cdots\,\)
        };
    \end{scope}
    
    \draw[->, thick, primaryblue] (monoidM) -- (posEnc) 
        node[midway, above, text=accentgray] {\(\,m\mapsto p_m\,\)};
    \draw[->, thick, primaryblue] (posEnc) -- (selfAttn.west) 
        node[midway, above, text=accentgray] {Add \(\,p_i\) = \(\mathbf{X}^n\)};
    \draw[->, thick, primaryblue] (selfAttn) -- (XnUpdated);
    \draw[->, thick, primaryblue] (XnUpdated) -- (stacked) 
        node[midway, above, text=accentgray] {Repeat};
        
    \end{tikzpicture}
    \caption{Conceptual overview of how the monoid \(M\) (left) provides positional encodings \(\bigl(p:\mathbf{BM}\to \cat{Vect}\bigr)\) that are added to the embedding space \(\mathbf{X}^n\). The main self-attention block is formalised as a parametric endofunctor with learnable queries \((Q)\), keys \((K)\), and values \((V)\). Repeated application of this block yields the stacked self-attention layers, interpreted categorically as a free monad on \(F\).}
    \label{fig:self-attention-architecture}
\end{figure}
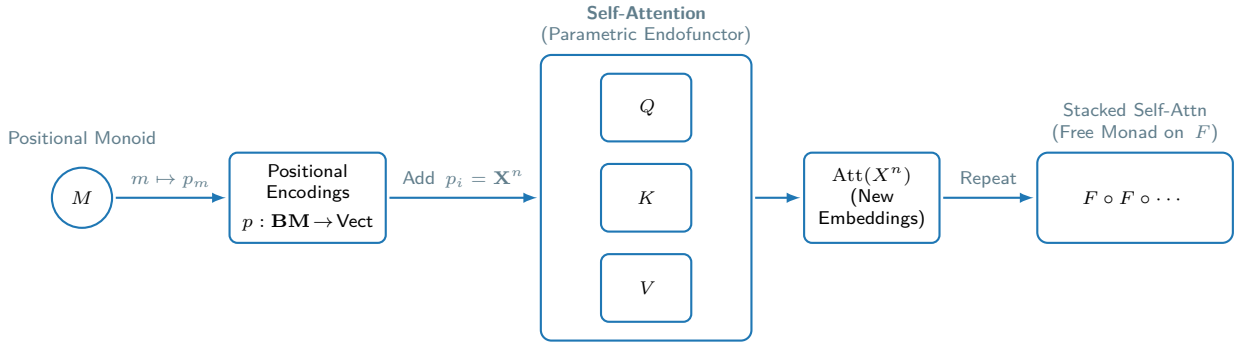

In this work, we take initial steps toward establishing a categorical framework for self-attention. We show that the linear components of self-attention can be viewed as a parametric 1-morphism in the 2-category \(\cat{Para}(\cat{Vect})\). Restricting to the underlying 1-category \(\cat{Vect}\), this same construction induces a parametric endofunctor whose repeated composition matches how multiple attention layers stack. Our perspective unifies mechanistic interpretability, symmetry arguments, and algebraic structures. Specifically:
\begin{enumerate}
\item \textbf{Self-Attention as a Parametric Endofunctor.} We formalise the linear portion of self-attention \((\mathbf{Q}, \mathbf{K}, \mathbf{V})\) as a parametric 1-morphism in \(\cat{Para}(\cat{Vect})\).  When viewed on the underlying category \(\cat{Vect}\), it acts like an endofunctor.  Stacking multiple layers corresponds to forming a free monad on that endofunctor.
\item \textbf{Positional Encodings in a Monoidal (Affine) Setting.} We treat strictly additive positional encodings as (affine) monoid actions on the embedding space, revealing how sequence indices can be functorially embedded.  In practice, the standard sinusoidal encodings used by most transformers are not strictly additive; we show they nonetheless form injective mappings of positions with a universal property among non-additive encodings.
\item \textbf{Equivariance and Symmetry in Self-Attention.} We establish that the linear part of self-attention exhibits natural equivariance with respect to permutations of input tokens. By connecting query, key, and value maps to group actions, we frame transformer layers within the language of equivariant neural networks, extending geometric deep learning insights to a broader algebraic setting.
\item \textbf{Connections to Circuit-Based Interpretability.} We demonstrate that our categorical viewpoint naturally encompasses circuit-based approaches \citep{elhage2021mathematical}: the minimal subgraphs identified in mechanistic interpretability correspond precisely to compositions of parametric morphisms. This provides a rigorous bridge between interpretability heuristics and formal category-theoretic structures.
\end{enumerate}

We focus exclusively on the linear components of self-attention. In real implementations, nonlinearities such as \(\mathrm{softmax}\) and layer norm alter attention patterns, but they lie outside the category \(\cat{Vect}\). By restricting to linear maps \(\mathbf{Q}, \mathbf{K}, \mathbf{V}\), we capture how parameters are shared and composed in a purely bilinear regime, sidestepping complications introduced by higher-order or non-linear operations. This choice enables a transparent exposition of the underlying 2-categorical framework while laying the groundwork for future extensions that may treat nonlinearities in more sophisticated categories (e.g.\ smooth manifolds or differential categories).

Our categorical perspective provides new mathematical tools for understanding core properties of transformers. By recasting mechanisms of attention in terms of monoid actions, endofunctors, and parametric morphisms, we unify different strands of deep learning theory — geometric, algebraic, categorical, and interpretability-based — within a cohesive framework. Although we do not claim a complete categorical treatment of transformers (non-linear operations and variable-length sequences remain beyond our current scope), we see this as an initial exploration aimed at inspiring future research and broader adoption of category-theoretic methods. Ultimately, we hope this approach will help illuminate design principles, clarify interpretability strategies, and advance the theoretical foundations of deep learning systems.

\section{Background}
\label{sec:background}

This section provides the necessary foundations for our categorical framework of self-attention. We begin by reviewing basic categorical concepts, then introduce the 2-category \(\cat{Para}(\cat{Vect})\) of parametric morphisms. We conclude by recalling how self-attention and positional encodings are formulated in standard transformers, along with a summary of related work.

\subsection{Category Theory Preliminaries}
A \emph{category} \(\mathcal{C}\) consists of a collection of \emph{objects} and \emph{morphisms} (arrows) between them, together with:
\begin{itemize}
    \item An associative composition rule \(\circ\) for composing morphisms.
    \item Identity morphisms \(\mathrm{id}_A\) on each object \(A\).
\end{itemize}
Formally:
\begin{definition}[Category]
A category \(\mathcal{C}\) consists of:
\begin{enumerate}
\item A collection of objects \(\Ob(\mathcal{C})\).
\item For each pair of objects \(A,B\), a set \(\mathrm{Hom}_\mathcal{C}(A,B)\) of morphisms.
\item For each object \(A\), an identity morphism \(\mathrm{id}_A \in \mathrm{Hom}_\mathcal{C}(A,A)\).
\item A composition operation \(\circ\) that is associative and respects identities.
\end{enumerate}
\end{definition}

A key category for our analysis is \(\cat{Vect}\), whose objects are finite-dimensional real vector spaces and whose morphisms are linear maps. Throughout, we also rely on the notion of \emph{endofunctors}:

\begin{definition}[Endofunctor]
An \emph{endofunctor} on a category \(\mathcal{C}\) is a functor \(F: \mathcal{C} \to \mathcal{C}\) that preserves composition and identities.
\end{definition}

Beyond ordinary categories, a \emph{2-category} allows us to handle \emph{morphisms between morphisms}, known as \emph{2-morphisms} \citep{leinster1999generalized, mac2013categories}. Informally, a 2-category has:
\begin{itemize}
    \item \textbf{Objects} (0-cells),
    \item \textbf{1-morphisms} between objects,
    \item \textbf{2-morphisms} between those 1-morphisms,
\end{itemize}
all of which obey suitable composition laws.

Our main object of study is \(\cat{Para}(\cat{Vect})\). Intuitively, \(\cat{Para}(\cat{Vect})\) captures the idea of a \emph{parametric} linear map, where each map depends on both ``parameters'' and ``data'' but remains linear in both. We define:

\begin{definition}[{\(\cat{Para}(\cat{Vect})\) with Tensor Products}]
\label{def:para-vect}
\(\cat{Para}(\cat{Vect})\) is the (strict) 2-category where:
\begin{itemize}
    \item \textbf{Objects}: finite-dimensional vector spaces \(X, Y, \dots\).
    \item \textbf{1-morphisms}: A 1-morphism from \(X\) to \(Y\) is a pair \((P, f)\), where
    \[
      P \in \cat{Vect}
      \quad\text{is a parameter space, and}\quad
      f : P \otimes X \;\longrightarrow\; Y
      \quad\text{is a linear map.}
    \]
      Here we use \(\otimes\) (tensor product) in lieu of the cartesian product \(\times\). This choice \textit{excludes} operations like copying or contracting parameters but aligns perfectly with the bilinear nature of weight matrices in linear neural layers.
    \item \textbf{2-morphisms}: Given two 1-morphisms \((P,f), (Q,g) : X \to Y\), a 2-morphism \(\rho: (P,f) \Rightarrow (Q,g)\) is a linear map \(\rho : Q \to P\) such that the following diagram commutes:
    \[
    \begin{tikzcd}[column sep=3em]
      Q \otimes X
      \arrow[r,"g"]
      \arrow[d,"\rho \otimes \mathrm{id}_X"']
      &
      Y
      \\
      P \otimes X
      \arrow[r,"f"']
      &
      Y \,.
    \end{tikzcd}
    \]
    Concretely, this “reparameterisation” \(\rho\) allows one parametric map to be turned into another via a linear relationship on the parameter spaces.
\end{itemize}
\end{definition}

\paragraph{Why tensor products?}  
Using \(\otimes\) rather than the more common cartesian product \(\times\) avoids copying/merging parameters. In linear attention layers, weights act bilinearly on \((\text{parameters}, \text{inputs})\), making \(\otimes\) the natural choice.  In typical “maximally general” parametric categories, one might use \(\times\) (cartesian), which admits diagonal maps and contraction, but does not match as neatly the strict bilinearity we see in self-attention’s query/key/value transformations.

\subsection{Self-Attention Mechanisms}

The transformer architecture's core component is the self-attention mechanism \citep{vaswani2017attention}. Given an input sequence $X = (x_1,\ldots,x_n)$ where each $x_i \in \mathbb{R}^d$, self-attention computes:
\begin{equation}
\text{Attention}(Q,K,V) = \text{softmax}\left(\frac{QK^T}{\sqrt{d_k}}\right)V
\end{equation}
where $Q = XW_Q$, $K = XW_K$, and $V = XW_V$ are linear projections of the input, with learned weight matrices $W_Q$, $W_K \in \mathbb{R}^{d \times d_k}$ and $W_V \in \mathbb{R}^{d \times d_v}$.

The linear components of self-attention can be formalised as morphisms in $\cat{Vect}$:
\begin{align*}
Q &: \mathbb{R}^d \rightarrow \mathbb{R}^{d_k} &
K &: \mathbb{R}^d \rightarrow \mathbb{R}^{d_k} &
V &: \mathbb{R}^d \rightarrow \mathbb{R}^{d_v}
\end{align*}

For a sequence of $n$ embeddings, these are applied componentwise:
\begin{align*}
Q^{(n)} &: (\mathbb{R}^d)^n \rightarrow (\mathbb{R}^{d_k})^n &
K^{(n)} &: (\mathbb{R}^d)^n \rightarrow (\mathbb{R}^{d_k})^n &
V^{(n)} &: (\mathbb{R}^d)^n \rightarrow (\mathbb{R}^{d_v})^n
\end{align*}

\subsection{Positional Encodings}

Standard transformer architectures add positional encodings to the input embeddings to capture sequence order \citep{chen2021simple}. For position $i$, the encoding $p_i \in \mathbb{R}^d$ is typically given by:
\begin{align*}
p_{i,2j} &= \sin(i/10000^{2j/d}) &
p_{i,2j+1} &= \cos(i/10000^{2j/d})
\end{align*}
The complete input to self-attention becomes $z_i = x_i + p_i$ for each position $i$.

\subsection{Related Work}

A growing body of research aims to provide mathematical foundations for transformers and related architectures, often drawing from frameworks in geometry, category theory, and logic. \textbf{Geometric deep learning} (GDL) has proven particularly fertile in this regard, emphasising principles of symmetry, equivariance, and representation theory \citep{bronstein2021geometric, cohen2018spherical, weiler2023equivariant}. Recent work has shown that embedding neural architectures into structured spaces or algebras can enforce desired symmetry properties, improving generalisation and interpretability \citep{andreeva2023metricspacemagnitudegeneralisation, farina2021symmetrydrivengraphneuralnetworks}.

Another important theoretical perspective emerges from work on parametric morphisms in category theory. While most general constructions of parametric categories use cartesian products to represent parameter spaces, allowing for copying and other non-linear behaviours \citep{johnson20212, cruttwell2022categorical}, our approach deliberately adopts a tensor-based framework. This choice, while more restrictive, aligns naturally with the linear-algebraic foundations of transformer architectures. Though this restriction to tensor products means we cannot capture certain non-linear or higher-order operations, it provides a cleaner theoretical framework for analysing the query/key/value decomposition in self-attention.

Within GDL, \textit{Geometric Algebra Transformers (GATr)} \citep{Brehmer2023GeometricAT, Haan2023EuclideanPC} are notable. These architectures parametrise network layers within projective or conformal geometric algebras, ensuring equivariance under E(3) and related symmetry groups. This geometric lens dovetails with earlier frameworks that target group equivariance using representation theory \citep{cohen2018spherical, weiler2023equivariant} and incorporate domain symmetries in message passing or attention mechanisms \citep{bronstein2021geometric}. Yet, these methods remain predominantly tied to specific symmetry groups or geometric structures, such as rotations in $\mathrm{SO}(3)$ or Clifford algebras, and do not provide a unified categorical formalism that seamlessly handles parameters, morphisms, and higher-level transformations.

Beyond geometric approaches, researchers have explored logic- and topos-theoretic perspectives. By embedding neural networks into categorical or logical frameworks, one can analyse expressive power and structural constraints. For example, \citet{Villani2024TheTO} frame transformers within a topos-theoretic setting, contrasting them with convolutional and recurrent networks that inhabit a more elementary ``first-order'' realm. Their findings complement earlier categorical perspectives on machine learning \citep{fong2019backprop, cruttwell2022categorical, gavranovicposition, capucci2021towards}, as well as works emphasising functorial abstractions of neural architectures \citep{johnson20212, lack20092, baydin2018automatic}. While these investigations articulate deep theoretical insights --- transformers emerge as ``higher-order reasoners'' living in a suitable topos completion --- they do not directly specify how to systematically dissect core building blocks like self-attention into clean categorical primitives that capture parameter sharing, monoidal structures, or compositional layers.

A related vein of research uses polynomial functors, invariant theory, and representation-theoretic characterisations to design or analyse equivariant neural architectures. For instance, \citet{Gregory2024LearningET} apply classical invariant theory and polynomial characterisations to build equivariant machine learning models that reflect structured symmetries, echoing earlier methods in equivariant networks \citep{cohen2018spherical, weiler2023equivariant, bronstein2021geometric}. These polynomial- and tensor-based approaches can recover or surpass known spectral methods and link to theoretical computer science perspectives. Still, they remain confined to particular group actions or algebraic constraints and do not generalise to a fully categorical setting that can, in principle, integrate arbitrary parameter spaces, linear maps, and multiple compositional layers.

Thus, prior work has produced a rich tapestry of theoretical tools --- geometric algebras for 3D equivariance \citep{Brehmer2023GeometricAT, Haan2023EuclideanPC, cohen2018spherical, bronstein2021geometric}, topos-theoretic analyses of expressivity \citep{Villani2024TheTO}, and invariant-theoretic insights into polynomial equivariance \citep{Gregory2024LearningET} --- but each line of inquiry remains specialised. They either target a narrow symmetry group, focus on logical expressivity without fully unpacking the structure of attention, or deal with polynomial invariants in isolation. The broader categorical approaches to neural networks \citep{fong2019backprop, cruttwell2022categorical, gavranovicposition, capucci2021towards} offer a potential unifying language but have yet to fully explicate transformer architectures in these terms. In contrast, we propose a \textit{categorical framework for self-attention as a parametric endofunctor} enriched into a 2-category, providing a unified lens that subsumes parameter sharing, positional encodings, and equivariance properties into a single, coherent algebraic structure.

\section{Self-Attention Components as Parametric 1-Morphisms}
\label{sec:self_attention_endofunctors}

In this section, we show how the linear parts of a single-head self-attention mechanism (query, key, and value maps) can be unified into a \emph{parametric 1-morphism} in the 2-category \(\cat{Para}(\cat{Vect})\).  From the perspective of the underlying \emph{1}-category \(\cat{Vect}\), this yields an \emph{endofunctor} that (informally) takes a vector-space input \(\,X\), applies Q/K/V to each token, and returns the triple of projected embeddings.  

We first recall that \(\cat{Para}(\cat{Vect})\) is a 2-category whose:
\begin{itemize}
\item \textbf{Objects} coincide with those of \(\cat{Vect}\) (finite-dimensional real vector spaces),
\item \textbf{1-morphisms} \(X\to Y\) are \((P,f)\) where \(P\) is a parameter vector space and \(f: P \otimes X \to Y\) is linear in both \(\,P\) and \(X\),
\item 2-morphisms \((P,f) \Rightarrow (P',f')\) are linear maps \(\rho : P' \to P\) making the diagram commute in the obvious reparameterisation sense.  
\end{itemize}

Below, we construct a single parametric 1-morphism \((\AttP,\att)\) corresponding to a single-head self-attention module, show it indeed composes as required in \(\cat{Para}(\cat{Vect})\), and then see how on the underlying category \(\cat{Vect}\) it induces an endofunctor that can be iterated.

\subsection{Self-Attention as a Parametric 1-Morphism}

A single-head self-attention mechanism (ignoring nonlinearities like softmax) involves three linear maps:
\[
  Q: \mathbb{R}^d \;\to\; \mathbb{R}^{d_k},
  \quad
  K: \mathbb{R}^d \;\to\; \mathbb{R}^{d_k},
  \quad
  V: \mathbb{R}^d \;\to\; \mathbb{R}^{d_v},
\]
parameterised by matrices \(W_Q, W_K, W_V\).  For a sequence length \(n\), we model the input space as \(X = (\mathbb{R}^d)^n\cong \mathbb{R}^{n\cdot d}\).  Then \(Q,K,V\) each act \emph{componentwise} on the tokens of \(X\).

\paragraph{Combining Q, K, and V into a Single Parametric Map.}
To capture all three maps at once within \(\cat{Para}(\cat{Vect})\), we do the following.

Let \(QP \cong \mathbb{R}^{(d_k\times d)}\) be the parameter space for \(W_Q\).  Then define a bilinear map
   \[
     q: QP \otimes (\mathbb{R}^d) \;\longrightarrow\; \mathbb{R}^{d_k},
     \quad
     (W_Q,\; x) \;\mapsto\; W_Q\cdot x.
   \]
When extended to sequences of length \(n\), we get \(q^{(n)} : QP\otimes X \to (\mathbb{R}^{d_k})^n\). Similarly, define \(KP, VP\) for the key/value matrices, yielding linear maps \(k^{(n)}, v^{(n)}\).Form the \emph{direct sum} of parameter spaces, 
   \[
     \AttP \;:=\; QP \;\oplus\; KP \;\oplus\; VP,
   \]
so that a global parameter \(\,\theta\in \AttP\) includes all three matrices \((W_Q, W_K, W_V)\).  

Define
   \[
     \att : \AttP \otimes X \;\longrightarrow\; 
       \bigl(\mathbb{R}^{d_k}\bigr)^n \;\otimes\; \bigl(\mathbb{R}^{d_k}\bigr)^n 
       \;\otimes\; \bigl(\mathbb{R}^{d_v}\bigr)^n
   \]
by sending \((\theta, x)\) to the triple
   \(\bigl(q^{(n)}(\theta,x),\,k^{(n)}(\theta,x),\,v^{(n)}(\theta,x)\bigr)\) in the product space.
Hence \((\AttP,\att)\) is a parametric 1-morphism \(X \to Y\) in \(\cat{Para}(\cat{Vect})\), where \(Y\) is 
\[
  Y := \bigl(\mathbb{R}^{d_k}\bigr)^n \otimes \bigl(\mathbb{R}^{d_k}\bigr)^n \otimes \bigl(\mathbb{R}^{d_v}\bigr)^n
  \;\simeq\; \mathbb{R}^{n\,d_k}\otimes \mathbb{R}^{n\,d_k}\otimes \mathbb{R}^{n\,d_v}.
\]
More precisely, if we prefer \(\oplus\) over \(\otimes\) for \(\cat{Vect}\)-products, we would adapt the same idea. The exact product vs.\ tensor details can be chosen according to how one encodes multi\-headed or multilinear aspects.

\paragraph{Composition in \(\cat{Para}(\cat{Vect})\).}
Recall that if we have two parametric morphisms
\[
  (P,f): X \to Y
  \quad\text{and}\quad
  (Q,g): Y \to Z,
\]
their composite is \(\bigl(Q\otimes P,\; h\bigr)\) with
\[
  h: (Q\otimes P)\otimes X
    \;\xrightarrow{\cong}\;
    Q \otimes (P\otimes X)
    \;\xrightarrow{\,Q\otimes f\,}\;
    Q \otimes Y
    \;\xrightarrow{\,g\,}\;
    Z.
\]
In other words, the parameter spaces combine via tensor (\(Q\otimes P\)), and we feed the output of \(f\) into \(g\).  For self-attention, stacking multiple heads or layers amounts to this \(\cat{Para}(\cat{Vect})\)-style composition of parametric morphisms, preserving the bilinear in parameters \(\times\) data viewpoint.

\begin{theorem}[Self-Attention as a Parametric 1-Morphism on \(\cat{Vect}\)]
\label{thm:selfattn-para-1morphism}
 
Let \(Q,K,V \colon \mathbb{R}^d \,\to\, \mathbb{R}^{d_k}\) (or \(\mathbb{R}^{d_v}\) for the value projection) be three linear maps representing the query, key, and value transformations of a single-head self-attention mechanism.  For a length-\(n\) sequence, the input space is \(X \,=\, (\mathbb{R}^d)^n \cong \mathbb{R}^{n \cdot d}\).  Then there exists a single \textbf{parametric 1-morphism}
\[
  (\AttP,\att)\;:\;X \,\longrightarrow\, Y
  \quad\text{in the 2-category}\;\cat{Para}(\cat{Vect}),
\]
where
\begin{enumerate}
\item \(\AttP\;=\;QP \,\oplus\, KP \,\oplus\, VP\) is a direct sum of parameter spaces for \(Q, K, V\).  
\item \(\att\colon \AttP \otimes X \,\to\, Y\) is a linear map (in both parameters and inputs) whose output encodes the triple \(\bigl(Q(x),K(x),V(x)\bigr)\) for each token \(x\in X\).  
\item \(Y\;\cong\;(\mathbb{R}^{d_k})^n\;\otimes\;(\mathbb{R}^{d_k})^n\;\otimes\;(\mathbb{R}^{d_v})^n\), i.e.\ the space of all query/key/value embeddings of an \(n\)-token input.
\end{enumerate}

Furthermore, this 1-morphism \((\AttP,\att)\) is \emph{stable under composition} in \(\cat{Para}(\cat{Vect})\). Concretely, stacking multiple attention layers corresponds to repeated composition of \((\AttP,\att)\) with similar parametric morphisms, matching how self-attention is practically stacked in transformer architectures.  

Finally, in the underlying 1-category \(\cat{Vect}\), \((\AttP,\att)\) induces an \emph{endofunctor} \(F\colon \cat{Vect} \to \cat{Vect}\), so that iterating self-attention amounts to iterating \(F\).
\end{theorem}

We provide a full proof in Appendix~\ref{app:self_attention_para_morphism}.

\subsection{From an Endofunctor to a Free Monad (Stacked Self-Attention)}

Since each single-head, linear self-attention yields an endofunctor \(F:\cat{Vect}\to\cat{Vect}\), \emph{stacking} multiple layers corresponds to repeatedly composing \(F\).  Formally, one obtains the \(\omega\)-chain
\[
  X \,\mapsto\, F(X) \,\mapsto\, F^2(X) \,\mapsto\, F^3(X) \,\mapsto\, \dots
\]
In category theory, collecting such iterates is precisely the \emph{free monad} construction on \(F\).  Concretely, the free monad \(\mathrm{Free}(F)\) is the colimit of all finite powers \(F^n\), with universal maps gluing them.  Thus:

\begin{theorem}[Stacking as a Free Monad]
 \label{thm:stacking-free-monad}  
Let \(F: \cat{Vect}\to \cat{Vect}\) be the endofunctor induced by a single-head, \emph{linear} self-attention mechanism, as constructed in Section~\ref{sec:self_attention_endofunctors}.  Then the colimit (in \(\cat{Vect}\)) of the sequence
\[
  \mathrm{id} \;\longrightarrow\; F \;\longrightarrow\; F^2 \;\longrightarrow\; \dots
\]
forms the \emph{free monad} on \(F\). In other words, \emph{stacking} self-attention layers is exactly building \(\mathrm{Free}(F)\).  

Equivalently, if we denote that colimit object by \(T\), then \(T\) admits a canonical monad structure \((T,\eta,\mu)\) extending \(F\), and any other monad \(M\) extending \(F\) factors uniquely through \(T\).
\end{theorem}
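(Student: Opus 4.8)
The plan is to present \(T := \mathrm{Free}(F) = \mathrm{colim}_n F^n\) as the value at \(F\) of the left adjoint to the forgetful functor from monads on \(\cat{Vect}\) to endofunctors on \(\cat{Vect}\), and then to unpack what that adjunction asserts into the concrete data \((T,\eta,\mu)\) and the stated factorisation. First I would deal with the size bookkeeping: the object \(\coprod_n F^n(X)\) is generically infinite-dimensional, so it does not lie in \(\cat{Vect}\) as defined, and I would form the colimit in the cocompletion — the category of all real vector spaces (equivalently, the ind-completion of \(\cat{Vect}\)), into which \(\cat{Vect}\) fully embeds and which is cocomplete. The cheap technical input here is that \(F\), as constructed in Section~\ref{sec:self_attention_endofunctors}, is a finite composite of functors of the form \((-)\otimes\mathbb{R}^{m}\) (tensoring with a fixed finite-dimensional space, a left adjoint) together with finite direct sums; hence \(F\) is cocontinuous — in particular it preserves the \(\omega\)-chain colimit below — and it is \emph{additive}, i.e.\ it sends finite direct sums to finite direct sums, because it is assembled entirely out of linear maps.

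Next I would set up the free-monad sequence: \(T_0 = \mathrm{id}\), \(T_{n+1} = \mathrm{id}\oplus F\,T_n\), with connecting maps \(T_n\to T_{n+1}\) defined inductively from the coproduct injection \(\mathrm{id}\hookrightarrow\mathrm{id}\oplus F\,\mathrm{id}\). Additivity of \(F\) gives, by an easy induction, \(T_n\cong\bigoplus_{k=0}^{n}F^k\), so \(T := \mathrm{colim}_n T_n\cong\coprod_{k\ge 0}F^k\) — this is exactly the colimit named in the statement, and it is the observation that makes the informal chain \(\mathrm{id}\to F\to F^2\to\cdots\) literally correct in the additive setting (it would \emph{not} be for a general non-additive \(F\)). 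Since \(F\) preserves this colimit, \(\mathrm{id}\oplus F T\cong\mathrm{colim}_n(\mathrm{id}\oplus F T_n)=\mathrm{colim}_n T_{n+1}\cong T\), so \(T\) is the initial algebra of the endofunctor \(Z\mapsto\mathrm{id}\oplus F Z\) (Adámek's construction applied on the functor category).

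For the monad structure I would invoke the standard fact that the initial algebra \(T\) of \(Z\mapsto\mathrm{id}\oplus F Z\) carries a canonical monad structure \((T,\eta,\mu)\): \(\eta:\mathrm{id}\to T\) is the left coproduct injection (the \(k=0\) summand), and for each \(X\) the object \(TX\) is the free \(F\)-algebra on \(X\), so \(\mu_X:TTX\to TX\) is the unique \(F\)-algebra homomorphism extending \(\mathrm{id}_{TX}\); the monad laws follow by uniqueness in these universal properties, which I would cite as the textbook ``free monad from an initial algebra'' argument rather than grind through. Finally, for universality: given a monad \(M=(M,\eta^M,\mu^M)\) and a natural transformation \(\alpha:F\to M\), I would build \(\beta:T\to M\) along the chain by \(\beta_0=\eta^M\) and \(\beta_{n+1}:\mathrm{id}\oplus F T_n\to M\) equal to \(\eta^M\) on the first summand and to the composite \(F T_n\xrightarrow{\alpha T_n}M T_n\xrightarrow{M\beta_n}MM\xrightarrow{\mu^M}M\) on the second; these commute with the connecting maps, hence glue to \(\beta=\mathrm{colim}_n\beta_n\), and an induction shows \(\beta\) is a monad morphism restricting to \(\alpha\) on \(F\), with uniqueness because any such morphism is forced to agree with the inductive formula on every \(T_n\). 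This is exactly the statement that any monad \(M\) extending \(F\) factors uniquely through \(T\).

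The hard part will not be any single deep lemma but keeping the foundations honest: the colimit genuinely leaves the finite-dimensional category, so one must be explicit about where \(\mathrm{Free}(F)\) lives and check that \(F\) and the chain still behave there — and once \(F\) is recognised as cocontinuous and additive, the rest is an application of classical free-monad machinery. A secondary point deserving care is a clean proof of additivity of \(F\), since the identification \(T_n\cong\bigoplus_{k\le n}F^k\) is precisely what licenses phrasing the theorem in terms of the powers \(F^n\) rather than the more cumbersome pointed-endofunctor sequence \(T_{n+1}=\mathrm{id}\oplus F T_n\).
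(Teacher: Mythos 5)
Your proposal is correct in substance but follows a genuinely different route from the paper. The paper's own proof works directly with the chain of powers \(\mathrm{id}\Rightarrow F\Rightarrow F^2\Rightarrow\cdots\): it takes the colimit \(T\) of that chain in the functor category, defines \(\eta\) as the leg \(\phi_0\), defines \(\mu\) by ``flattening'' composites via \(F^n\circ F^m = F^{n+m}\), and obtains universality by observing that any monad \(M\) extending \(F\) induces a cocone \(F^n\Rightarrow M\) over the same chain; the monad laws and the existence of the connecting maps \(F^n\Rightarrow F^{n+1}\) are asserted by appeal to ``coherence'' rather than constructed. You instead run the standard Ad\'amek-style construction of the free monad as the initial algebra of \(Z\mapsto \mathrm{id}\oplus F Z\), obtain the monad structure and the universal factorisation through \(T\) by the textbook free-\(F\)-algebra argument, and only then recover the paper's ``sum of powers'' picture \(T\cong\bigoplus_k F^k\) from additivity of \(F\). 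What your route buys is precision at exactly the points the paper glosses over: it explains where the connecting maps actually come from (coproduct injections in the accumulating chain \(T_{n+1}=\mathrm{id}\oplus F T_n\), rather than a nonexistent canonical point \(\mathrm{id}\Rightarrow F\)), it isolates the hypothesis (\(F\) additive and \(\omega\)-cocontinuous) under which ``colimit of powers'' is even the right object, and it addresses the size issue that the colimit leaves finite-dimensional \(\cat{Vect}\), which the paper ignores. What the paper's route buys is directness: the identification of ``one more layer'' with ``one more application of \(F\)'' is immediate, at the cost of rigour. One caveat on your side: the claim that \(F\) is literally a finite composite of functors \((-)\otimes\mathbb{R}^m\) and finite direct sums should be checked against the paper's actual (somewhat loosely specified) definition of \(F\) in Appendix~\ref{app:self_attention_para_morphism}; additivity and cocontinuity are plausible for the intended linear construction, but they are an assumption you are supplying, not something the paper establishes.
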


In practice, implementing multiple transformer layers means repeatedly applying Q/K/V (plus other operations) to produce deeper embeddings.  The free-monad viewpoint reveals a universal property: any other ``monad'' structure extending that same endofunctor must factor through \(\mathrm{Free}(F)\).  One can see this as the “most general way” to keep stacking self-attention blocks.

We give a detailed proof in Appendix~\ref{app:stacked_free_monad}, but intuitively the composition of parametric morphisms \(\,(Q\otimes P, h)\) ``unrolls'' indefinitely, and the colimit combines them into a single monadic object.

\section{Positional Encodings as (Potential) Monoid Actions}

Modern transformers use positional encodings to incorporate sequence order into an otherwise permutation-invariant architecture. In practice, a vector \(p_i\in X\) (often in \(\mathbb{R}^d\)) is added to each token embedding \(x_i\). From a strict linear\-algebraic perspective, adding \(p_i\) is an affine shift, not a purely linear transformation. In this section, we show that if these shifts satisfy \emph{strict additivity}, then they do form a (genuine) monoid action---but in the category of \emph{affine} transformations rather than strictly linear maps. We also compare strictly additive encodings to the popular \emph{sinusoidal} approach, which injectively labels positions but does not preserve the additive monoid structure.

\subsection{Strictly Additive Encodings as Affine Monoid Actions}

Let \(\,M\) be a monoid (e.g.\ \(\,( \mathbb{N}, + )\)) representing token positions.  Suppose we have a family \(\{\,p_m \in X : m \in M\}\) of vectors in a real vector space \(X\).  Define a map
\[
\alpha_m(x) \;=\; x \;+\; p_m
\]
for each \(m\in M\).  Strictly speaking, \(\alpha_m\colon X \to X\) is an \emph{affine} transformation (it does not fix the origin unless \(p_m=0\)).  We can treat \(X\) as an affine space and regard \(\alpha_m\) as a \emph{translation} by \(p_m\).  

\paragraph{Additivity Condition}

If we want \(\alpha_m\) to form a coherent monoid action by translations, we need:
\begin{enumerate}
\item \(p_0 = 0\), so \(\alpha_0\) is the identity shift, and
\item \(p_{m+m'} = p_m + p_{m'}\) for all \(m,m' \in M\), so that
   \[
     \alpha_{m+m'} \;=\; \alpha_m \,\circ\, \alpha_{m'}.
   \]
\end{enumerate}
This exactly says the family \(\{\,p_m\}\) behaves additively with respect to the monoid operation.

\begin{remark}[Affine vs.\ Linear]\label{rem:affine-vs-linear}
Although we often label \(\alpha_m\) as a “map” in \(\cat{Vect}\), it is \emph{not} strictly linear: 
\[
  \alpha_m(0) \;=\; p_m \;\neq\; 0, 
  \quad
  \alpha_m(x + y)\;\neq\;\alpha_m(x) + \alpha_m(y)\text{ in general.}
\]
Hence the more precise statement is that \(\alpha_m\) belongs in the category of \textbf{affine} transformations.  In an \emph{affine} category, such translations are valid morphisms.  Equivalently, if we fix an origin in \(X\), we can treat \(\alpha_m\) as “linear plus bias,” which is typical in machine learning.  
\end{remark}

\paragraph{Formal Monoid Action (Affine)}

If the shifts \(\,p_m\) satisfy 
\[
  p_{m+m'} \;=\; p_m + p_{m'},
  \quad
  p_0 = 0,
\]
then \(\{\alpha_m\}\) does indeed define an \emph{(affine) monoid action}.  From a purely categorical viewpoint, one can say it extends to a functor
\[
   \mathcal{B}M \;\longrightarrow\;\mathbf{Aff},
\]
where \(\mathbf{Aff}\) is the category of affine spaces and affine maps.  If you forcibly treat \(\alpha_m\) as a map in \(\cat{Vect}\), you must keep in mind it is affine in nature; it is “linear” only in the sense that \(\alpha_m(x) - \alpha_m(0) = x\).  

Consider the following example. A purely additive embedding \(p_m = m\cdot v\) for a fixed \(v\in X\) is the simplest instance. Then \(\alpha_m(x)=x + m\cdot v\) is exactly a translation by \(m\,v\). This scenario can capture a strictly linear progression of positions but may not match the typical sinusoidal or learned embeddings used in real transformers.

\subsection{Non-Additive (Sinusoidal) Encodings as Injective Labellings}

In practice, many standard or learned encodings \(\{\,p_i\}\) do not satisfy \(p_{i+i'}=p_i + p_{i'}\). The most famous is the \textbf{sinusoidal} scheme \citep{vaswani2017attention}, where
\[
  p_{i,2j} = \sin\!\bigl(\tfrac{i}{10000^{\,2j/d}}\bigr),
  \quad
  p_{i,2j+1} = \cos\!\bigl(\tfrac{i}{10000^{\,2j/d}}\bigr).
\]
Clearly, 
\(\sin(\alpha+\beta)\;\neq\;\sin(\alpha)+\sin(\beta)\), so this scheme is \emph{not} additive. Nonetheless, each position \(i\) is mapped to a distinct vector \(p_i\) (for typical index ranges and dimension \(d\)), which is enough for a transformer to distinguish positions.  

\paragraph{Faithful (Injective) Encodings}

We can still formalise such schemes if we only require \emph{injectivity}, ignoring the additive structure.  Concretely, let \(M\) be a monoid of positions (\(\mathbb{N}\) under \(+\), or finite \(\{0,\dots,n-1\}\)).  A map \(p\colon M\to X\) is called “faithful” or “injective” if \(p_m \neq p_{m'}\) whenever \(m\neq m'\).  We can regard it as a functor from the discrete category \(\mathcal{B}M\) to \(\cat{Vect}\) \emph{only in the trivial sense} that each element \(m\) is sent to a distinct point \(p_m\).  We do \emph{not} require \(\alpha_{m+m'} = \alpha_m\circ\alpha_{m'}\).  

One can define a category \(\mathcal{P}\) whose objects are pairs \((X, p)\) with \(p\) an injective map \(M\to X\), and whose morphisms are linear maps \(f: X\to Y\) preserving positions: \(f(p_m)=q_m\).  In such a category, \emph{any} strictly additive action is a \textit{special case}, but also any sinusoidal or learned embedding that injects positions is allowed.

\subsection{Rethinking ``Universality'' of Sinusoidal Encodings}

In earlier treatments, one might say that sinusoidal encodings are ``universal''.  However, strictly speaking, they are \emph{not} universal in the category of \(\mathbf{Aff}\) actions or \(\cat{Vect}\) actions, because they do not preserve the additive structure.  Instead, they can be seen as \(\textbf{initial objects}\) in the \textbf{looser category} \(\mathcal{P}\) of ``faithful position embeddings'' --- provided certain spanning assumptions hold (e.g.\ the sinusoidal vectors collectively generate \(\mathbb{R}^d\)).  

All that matters is that any injective embedding $(X,q)$ factors \emph{uniquely} through the sinusoidal embedding \((X^{\sin}, p^{\sin} )\) via a linear map \(f : X^{\sin} \to X\). This factorisation is guaranteed if \(\{p^{\sin(m)}\}\) forms a generating set for \(X^{\sin}\). Hence sinusoidal embeddings are universal among \emph{injective} encodings, not among strictly additive/affine ones.

For real transformers, being able to injectively label each position is often the only requirement.  Sinusoidal encodings do this in a phase-rich way that can help models recover relative distance.  But they do \emph{not} define a monoid action \(\alpha_{m+m'}=\alpha_m\circ\alpha_{m'}\), nor do they act linearly in \(\mathbf{Vect}\). A further discussion of these points is given in Appendix~\ref{app:sinusoidal_encodings}.

\section{Equivariance Properties of Self-Attention}

The linear components of self-attention mechanisms exhibit natural symmetries with respect to permutations of input tokens \citep{bronstein2021geometric, zaheer2017deep}. We can formalise these symmetries through the lens of group equivariance, demonstrating that the query, key, and value projections respect certain group actions on the sequence space.

\begin{theorem}
Let $G$ be a group acting on the index set $\{1, \ldots, n\}$ of input tokens, and consider the linear components of self-attention: the query, key, and value projections. Then these linear maps are equivariant with respect to the induced $G$-action on the embedding space. In other words, applying the group action before the attention maps is the same as applying the attention maps before the group action.
\end{theorem}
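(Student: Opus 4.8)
The plan is to unwind the definitions on both sides and check that the permutation action commutes with the componentwise query/key/value maps because those maps act identically on every token slot. First I would fix the setup precisely: given $G$ acting on $\{1,\dots,n\}$, the induced action on the sequence space $X = (\mathbb{R}^d)^n$ is the permutation representation $\pi_g$ sending a tuple $(x_1,\dots,x_n)$ to $(x_{g^{-1}(1)},\dots,x_{g^{-1}(n)})$; this is a linear automorphism of $X$ for each $g$, and $g\mapsto\pi_g$ is a homomorphism $G\to GL(X)$. Similarly define the induced action $\pi^{k}_g$ on $(\mathbb{R}^{d_k})^n$ and $\pi^{v}_g$ on $(\mathbb{R}^{d_v})^n$ by the same index permutation. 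The claim to prove is then $Q^{(n)}\circ\pi_g = \pi^{k}_g\circ Q^{(n)}$ for all $g\in G$, and likewise for $K^{(n)}$ and $V^{(n)}$.

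The key step is to use the fact, already recorded in the background section, that $Q^{(n)}$, $K^{(n)}$, $V^{(n)}$ are defined \emph{componentwise}: $Q^{(n)}(x_1,\dots,x_n) = (Q x_1,\dots,Q x_n)$, where $Q\colon\mathbb{R}^d\to\mathbb{R}^{d_k}$ is the \emph{same} linear map in every slot. Evaluating both composites on a generic tuple, $Q^{(n)}(\pi_g(x_1,\dots,x_n)) = Q^{(n)}(x_{g^{-1}(1)},\dots,x_{g^{-1}(n)}) = (Q x_{g^{-1}(1)},\dots,Q x_{g^{-1}(n)})$, while $\pi^{k}_g(Q^{(n)}(x_1,\dots,x_n)) = \pi^{k}_g(Q x_1,\dots,Q x_n) = (Q x_{g^{-1}(1)},\dots,Q x_{g^{-1}(n)})$; the two agree because reindexing the slots and applying $Q$ slotwise are independent operations that commute precisely when the slotwise map does not depend on the slot index — which is exactly the weight-sharing built into self-attention. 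The same computation verbatim handles $K^{(n)}$ and $V^{(n)}$. I would also remark that one can phrase this more abstractly: componentwise application is the functor $(-)^n$ (the $n$-fold product/diagonal) applied to the morphism $Q$, and $(-)^n$ is a $G$-object in $\cat{Vect}$ via the permutation action, so $Q^{(n)}$ is a morphism of $G$-representations by naturality — but the elementwise check is self-contained and suffices.

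I do not expect a serious obstacle here; the statement is essentially a restatement of weight sharing. The one point requiring minor care is bookkeeping the variance of the action — ensuring the permutation on the domain $(\mathbb{R}^d)^n$ and on the codomain $(\mathbb{R}^{d_k})^n$ is ``the same'' permutation of slots (same $g$, same convention $g^{-1}$ on indices) so that the equivariance square genuinely closes; a sloppy choice of left-versus-right action would make the diagram fail to typecheck even though the underlying content is trivial. A secondary remark worth including: this equivariance is exactly why positional encodings (Section~4) are needed — they break the $G$-symmetry by adding position-dependent vectors $p_i$ that are not permutation-invariant — so the theorem delimits precisely the part of a transformer layer that is symmetric. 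I would close by noting the result extends immediately to the combined parametric $1$-morphism $(\AttP,\att)$ of Theorem~\ref{thm:selfattn-para-1morphism}, since $\att$ is built from $q^{(n)},k^{(n)},v^{(n)}$ by direct sum on parameters and product on outputs, both of which are compatible with the slotwise $G$-action.
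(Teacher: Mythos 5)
Your proof is correct and follows essentially the same route as the paper's: defining the induced permutation representation with the $g^{-1}$ indexing convention and verifying by direct componentwise evaluation that both composites yield $(Q x_{g^{-1}(1)},\dots,Q x_{g^{-1}(n)})$, with the identical argument for $K^{(n)}$ and $V^{(n)}$. The additional remarks (functorial phrasing, symmetry breaking by positional encodings, extension to $(\AttP,\att)$) go beyond the paper's proof but the core argument coincides.
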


\begin{proof}
Suppose $G$ acts on the set $\{1,\ldots,n\}$ via a homomorphism $\sigma: G \to S_n$, so that for $g \in G$, $\sigma_g$ is a permutation of the indices. Consider the embedding space $X = \mathbb{R}^d$ for a single token, and $X^n \cong \mathbb{R}^{n \cdot d}$ for a sequence of $n$ tokens.

The group action on indices induces a linear representation $R_g: X^n \to X^n$ given by permuting the coordinates:
\[
    R_g(x_1,\ldots,x_n) = (x_{\sigma_g^{-1}(1)},\ldots,x_{\sigma_g^{-1}(n)}).
\]

Let $Q: X \to \cat{QSpace}$, $K: X \to \cat{KSpace}$, and $V: X \to \cat{VSpace}$ be the linear parts of self-attention. Each extends to sequences by applying them componentwise:
\[
    Q^{(n)}: X^n \to \cat{QSpace}^n, \quad K^{(n)}: X^n \to \cat{KSpace}^n, \quad V^{(n)}: X^n \to \cat{VSpace}^n.
\]

To show equivariance for $Q^{(n)}$, consider the following commutative diagram for each $g \in G$:
\[
    \begin{tikzcd}
        X^n \arrow[r, "Q^{(n)}"] \arrow[d, "R_g"'] & \cat{QSpace}^n \arrow[d, "R_g"] \\
        X^n \arrow[r, "Q^{(n)}"] & \cat{QSpace}^n
    \end{tikzcd}
\]

Following the top-down-left-right path:
\[
    Q^{(n)}(R_g(x_1,\ldots,x_n)) = Q^{(n)}(x_{\sigma_g^{-1}(1)},\ldots,x_{\sigma_g^{-1}(n)}) = (Q(x_{\sigma_g^{-1}(1)}),\ldots,Q(x_{\sigma_g^{-1}(n)})).
\]

Following the left-right-top-down path:
\[
    R_g(Q^{(n)}(x_1,\ldots,x_n)) = R_g(Q(x_1),\ldots,Q(x_n)) = (Q(x_{\sigma_g^{-1}(1)}),\ldots,Q(x_{\sigma_g^{-1}(n)})).
\]

Both paths yield the same result, showing $Q^{(n)} \circ R_g = R_g \circ Q^{(n)}$. The same argument applies identically to $K^{(n)}$ and $V^{(n)}$, giving:
\[
    K^{(n)} \circ R_g = R_g \circ K^{(n)}, \quad V^{(n)} \circ R_g = R_g \circ V^{(n)}.
\]

Thus, the linear parts of the self-attention mechanism are equivariant under the given $G$-action.
\end{proof}

The key insight is that these components act pointwise on the sequence elements, ensuring that the order in which we apply the group action and the linear maps does not affect the final result. The absence of coupling between different sequence positions in these linear transformations is what enables this clean equivariance property. In other words, the linear portions of self-attention maintain the symmetric structure of the input space, with any symmetry breaking coming solely from the subsequent nonlinear operations. 

\section{A Categorical Perspective on Transformer Circuits}

Mechanistic interpretability, and especially the ``circuits'' approach proposed by \citet{elhage2021mathematical}, frames a transformer layer as a sum of minimal computational subgraphs --- sometimes described as ``paths'' --- that route information from input tokens through queries, keys, values, and outputs to ultimately influence the model's logits. In an \emph{attention-only} transformer, each attention head can be dissected into:
\begin{itemize}
    \item \textbf{QK circuit}: A low-rank bilinear form $W_{QK} = W_Q^\top W_K$ that determines the attention pattern (i.e., which source tokens get attended to for a given destination token).
    \item \textbf{OV circuit}: A low-rank matrix $W_{OV} = W_O W_V$ that determines how the \emph{value} information from the source token modifies the destination token's representation.
\end{itemize}

\citet{elhage2021mathematical} consider composition of these ``subcircuits'' across multiple layers or heads. For instance, a token's representation can flow from token $t$ to token $t'$ via one head, and then from $t'$ to $t''$ via another, building up complex ``induction heads'' or ``skip-trigram'' patterns. Their ``path expansion'' approach accounts for every route by which an input token's embedding might affect the final logits.

\textbf{Our aim} is to show that each of the minimal circuits in the Elhage sense naturally \emph{is} a parametric morphism, and that composing circuits corresponds to the 2-categorical composition of parametric morphisms.

\subsection{Preliminaries: Factorising Transformer Components}
A hallmark of the circuits approach is the factorisation $W_{QK} = W_Q^\top W_K$ for queries and keys, and $W_{OV} = W_O\,W_V$ for outputs and values. Each is typically rank-limited by the dimension of an attention head, $d_{\text{head}}$, which is smaller than the full model dimension $d_{\text{model}}$. The result is a set of simple, bilinear or linear maps that are easy to interpret:
\begin{itemize}
    \item $W_{QK}$ shapes an $\mathbb{R}^d \times \mathbb{R}^d \to \mathbb{R}$ bilinear form, controlling attention scores.
    \item $W_{OV}$ shapes an $\mathbb{R}^d \to \mathbb{R}^d$ linear transformation, controlling how the chosen source token's ``value'' modifies the destination token's embedding.
\end{itemize}

We will interpret these factorised maps as parametric morphisms in $\cat{Para}(\cat{Vect})$, where each factor (like $W_Q, W_K, W_O, W_V$) becomes part of the parameter space for that morphism.

We summarise key notations:
\begin{itemize}
    \item \textbf{$Q\colon X \to \mathrm{QSpace}$} and \textbf{$K\colon X \to \mathrm{KSpace}$}: the linear transformations that map from the token-embedding space $X \simeq \mathbb{R}^{d_{\text{model}}}$ to smaller spaces $\mathrm{QSpace}$, $\mathrm{KSpace}$.
    \item \textbf{$V\colon X \to \mathrm{VSpace}$} and \textbf{$O\colon \mathrm{VSpace} \to X$}: similarly map from the embedding space to a head's value subspace, and then back to the embedding space.
    \item \textbf{$W_{QK} = W_Q^\top W_K$}, \textbf{$W_{OV} = W_O\,W_V$}: condensed ``circuit'' forms.
\end{itemize}

Hence, for each head, we will form $(P_{QK}, f_{QK})$ and $(P_{OV}, f_{OV})$ in $\cat{Para}(\cat{Vect})$, whose parameter spaces collectively encode $W_Q, W_K, W_O, W_V$.

\subsection{QK, OV, and MLP Blocks as Parametric Morphisms}

Consider a single-head attention mechanism ignoring non-linearities. The \emph{attention pattern} is determined by $Q\colon X \to Q_{h}$ and $K\colon X \to K_{h}$, typically realised by matrices $W_Q$ and $W_K$. However, we can gather these into a single parametric morphism:
\[
  (P_{QK},\, f_{QK}) \colon X^n \;\longrightarrow\; \mathbb{R}^{n \times n}
\]
where $P_{QK} \simeq \mathbb{R}^{d_\theta}$ is the parameter space (containing all entries of $W_Q, W_K$), and $f_{QK}$ is bilinear in the sense that each position's query interacts with each position's key. In a purely linear regime (without softmax), it is a linear map from $\,P_{QK} \otimes (X^n)\, \to \,\mathbb{R}^{n \times n}$. In practice, the $\mathrm{softmax}$ is then applied to these attention scores, but if we restrict to the linear portion, the QK mechanism is precisely $(P_{QK}, f_{QK})$.

Similarly, the ``OV'' circuit can be captured by a parametric morphism
\[
  (P_{OV},\, f_{OV}) \colon X^n \;\longrightarrow\; X^n,
\]
where $P_{OV}$ includes $W_O$ and $W_V$. Each position's \emph{value} is read from the input embedding by $W_V$, and then the result is mapped back to $X$ by $W_O$. If we consider that the \emph{same} parameters are used for each token position, we see that $(P_{OV}, f_{OV})$ is indeed a single parametric morphism replicated across the sequence length.

This clarifies \citet{elhage2021mathematical}'s statement that attention moves information from the source token to the destination token: the parametric map $(P_{OV}, f_{OV})$ \emph{lifts} the residual stream into a smaller subspace, modifies it linearly, and writes it back.

Finally, for a feedforward (MLP) block, we typically have:
\[
   \mathrm{MLP}(x) \;=\; W_{O}^{m}\, \sigma \bigl(W_{I}^{m}\,x \bigr),
\]
where $\sigma$ might be $\mathrm{GeLU}$ or $\mathrm{ReLU}$. Strictly, $\sigma$ is a nonlinearity, so if we want to remain in $\cat{Vect}$, we cannot handle $\sigma$ exactly as a linear morphism. However, if we restrict to the matrix multiplications alone, $\mathrm{MLP}$ again factors as a parametric object $(P_{\mathrm{MLP}}, f_{\mathrm{MLP}})$ plus a separate non-linear function.

In a more advanced \emph{smooth} or \emph{differential} category \citep{baez2011convenient}, we can incorporate $\sigma$ directly, but even in a linear approximation, $\mathrm{MLP}$ is parametric morphism-like \citep{blute2009cartesian, ehrhard2022differentials}. Mechanistic interpretability often focuses on the linear parts of the MLP (the input and output weights), so modelling them in $\cat{Para}(\cat{Vect})$ is straightforward.

\subsection{Compositional Structure: ``Paths'' as 2-Categorical Compositions}

A central idea of \citet{elhage2021mathematical} is to expand the final logits $\ell$ as a sum over all possible routes information may take through the network. In a single-layer setting, we might have a direct path (token embedding $\to$ unembedding) plus paths that go embedding $\to$ QK $\to$ OV $\to$ unembedding, and so on. Their formalism enumerates these paths, each of which contributes an additive term to the logits.

Viewed categorically, each path is exactly a sequential composition of the relevant parametric morphisms:
\[
  (P_{QK}, f_{QK}) \;\circ\; (P_{OV}, f_{OV}) \;\circ\; \cdots
\]
in the 2-category $\cat{Para}(\cat{Vect})$. The sum of paths corresponds to the fact that in standard transformer code, the \emph{output} from each composition is \emph{additively} combined in the residual stream or the final logits.

A \textbf{string diagram} for parametric morphisms conveniently represents each 1-morphism as a box with two wires: a horizontal wire for the data flow ($X \to Y$) and a vertical wire for the parameter flow ($P$) \citep{selinger2011survey, fong2019backprop}. Composing the horizontal wires captures function composition, and tensoring the vertical wires expresses that we combine parameter spaces via direct sum or product when we stack heads or layers.

Hence, the circuit for a single attention head is a single box labeled ``$\mathrm{QK}/\mathrm{OV}$'' with a vertical line for the parameter space $P_{QK} \oplus P_{OV}$. A stacked attention layer or multi-head scenario arises by 2-categorical composition (for reparameterisation) and monoidal product (for parallel heads). This is shown in Figure \ref{fig:string-diagram-circuits}.

\begin{figure}[htb]
\centering
\begin{tikzpicture}[
    x=2.2em,         
    y=2.2em,         
    baseline=(current bounding box.center),
    font=\small,
    >=latex
]

\tikzstyle{data} = [->, thick, black]
\tikzstyle{param} = [->, thick, color=gray!70]
\tikzstyle{morphism} = [
    draw,
    thick,
    fill=blue!15,
    rectangle,
    minimum width=4em,
    minimum height=2.5em,
    align=center
]

\node (Xleft) at (0,0) {$X^n$};   
\node (Xmid)  at (4,0) {$X^n$};   
\node (Xright) at (8,0) {$X^n$};  

\draw[data] (Xleft) -- (Xmid);
\draw[data] (Xmid) -- (Xright);

\node at ($(Xleft)!0.5!(Xmid) + (0,1.8)$) (PQK) {$P_{QK}$};   
\draw[param] (PQK) -- ($(Xleft)!0.5!(Xmid)$);                

\path let \p1 = ($(Xleft)!0.5!(Xmid)$) in
      node[morphism, anchor=center] (QKbox) at (\x1,0) {QK};

\node at ($(Xmid)!0.5!(Xright) + (0,1.8)$) (POV) {$P_{OV}$};  
\draw[param] (POV) -- ($(Xmid)!0.5!(Xright)$);               

\path let \p2 = ($(Xmid)!0.5!(Xright)$) in
      node[morphism, anchor=center] (OVbox) at (\x2,0) {OV};

\node[below=0.4em of QKbox, align=center] {\(\bigl(P_{QK}, f_{QK}\bigr)\)};
\node[below=0.4em of OVbox, align=center] {\(\bigl(P_{OV}, f_{OV}\bigr)\)};

\node[below=3.0em of Xmid, text=gray!90!black, align=center] 
  {\(\text{Composition in } \cat{Para}(\cat{Vect})\)};

\end{tikzpicture}
\caption{A string diagram illustrating two parametric morphisms (QK and OV) composed sequentially in
\(\cat{Para}(\cat{Vect})\). Each morphism has a horizontal wire (data flow) and a vertical wire
(parameter space). The output of the first morphism (QK) feeds into the second (OV).}
\label{fig:string-diagram-circuits}
\end{figure}
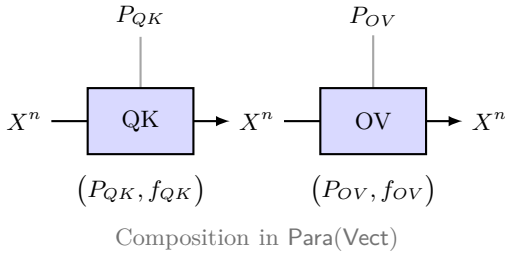
In standard linear algebra, if multiple parallel transformations exist from $X$ to $Y$, we sum their outputs. Mechanistic interpretability enumerates these parallel transformations --- like heads in an attention layer --- as the union of circuits. The 2-category $\cat{Para}(\cat{Vect})$ can model parallel heads as separate 1-morphisms whose outputs are added post-composition. Thus, each term in the Elhage path expansion is literally a path in the 2-category sense: an ordered composition of parametric morphisms. Summation arises from the additive nature of the residual stream or final logit computation.

\subsection{Examples: Stacked Self-Attention \& Circuit Factorisation}

A toy one-layer, single-head transformer ignoring nonlinearities is:
\[
  \ell(x) \;=\; W_U \,W_E \;+\; W_U\,W_{OV}\,W_E \;\otimes\; A\bigl(W_{QK}\bigr),
\]
where $W_E$ is the token embedding, $W_U$ is the unembedding, $W_{QK} = W_Q^\top W_K$, $W_{OV} = W_O\,W_V$, and $A\bigl(W_{QK}\bigr)$ is the attention mixing across tokens. In the parametric lens, we define:
\[
  (P_{QK}, f_{QK}) \quad\text{and}\quad (P_{OV}, f_{OV})
\]
respectively for QK and OV. The direct path $W_U\,W_E$ is just another linear map (possibly also represented as a parametric morphism with parameters $(P_{\mathrm{direct}}, f_{\mathrm{direct}})$). Thus, the decomposition into circuits---``bigram'' direct path plus skip-trigram (OV--QK) path---matches the sum of parametric morphisms.

\citet{elhage2021mathematical} highlight \emph{induction heads} in a two-layer transformer: the second-layer head's attention pattern depends on a \emph{previous token head} from the first layer, enabling repetition or ``copying'' of patterns. This is a composition in $\cat{Para}(\cat{Vect})$ of:
\begin{enumerate}
    \item The parametric morphism for first-layer QK/OV, $(P^1_{\mathrm{QK}}, f^1_{\mathrm{QK}})$ and $(P^1_{\mathrm{OV}}, f^1_{\mathrm{OV}})$.
    \item A reparameterisation that passes the first layer's output subspace into the second layer's input parameter space.
    \item The second-layer QK/OV morphisms, $(P^2_{\mathrm{QK}}, f^2_{\mathrm{QK}})$ and $(P^2_{\mathrm{OV}}, f^2_{\mathrm{OV}})$.
\end{enumerate}

Hence, an ``induction head circuit'' is a two-stage composition of parametric morphisms that unify Q--K composition across the layers. The same viewpoint extends to multi-layer networks, with more compositional ``virtual heads'' from repeated application.

\subsection{Interpreting Higher-Order ``Virtual Heads''}
In the circuits framework, a ``virtual head'' arises when one head's output is fed as input to another head in a deeper layer or the same layer. In $\cat{Para}(\cat{Vect})$ terms, this is exactly the composition of 1-morphisms $(P_1,f_1) \circ (P_2,f_2)$. If repeated, one obtains higher-order compositions $(P_n,f_n) \circ \cdots \circ (P_1,f_1)$. Mechanistic interpretability's enumerations of these ``virtual attention heads'' match exactly the chain of parametric morphisms under 2-categorical composition.

\bigskip 

By recasting Q, K, V, O, and MLP blocks as parametric morphisms in $\cat{Para}(\cat{Vect})$, we obtain a crisp perspective:
\begin{enumerate}
    \item \textbf{Circuit Blocks $\leftrightarrow$ Parametric 1-Morphisms.} Each minimal circuit (QK, OV, MLP input, MLP output) is a map $(P,f)$.
    \item \textbf{Paths $\leftrightarrow$ Composed Morphisms.} The chain of blocks that modifies a token's embedding is composition in the 2-category.
    \item \textbf{Weight Tying $\leftrightarrow$ 2-Morphisms.} If different heads share parameters or reparameterise them, that corresponds to 2-morphisms $(P' \to P)$.
\end{enumerate}

We have thus shown that the \textbf{circuits} view of transformers---decomposing them into minimal QK and OV blocks that route information between tokens---maps naturally to a \textbf{parametric 2-categorical} framework. In $\cat{Para}(\cat{Vect})$, each \textbf{QK} and \textbf{OV} pair is a \textbf{parametric 1-morphism}, with a parameter space for the learned weights. Stacking or chaining these blocks across layers yields \textbf{compositions} of parametric morphisms. The paths enumerated in mechanistic interpretability precisely match the sum (in the residual stream) of these compositional chains.

\section{Discussion}

Our categorical framework for transformers provides new mathematical tools for analysing these architectures while establishing formal connections between their components and classical algebraic structures. By demonstrating that \emph{linear} self-attention maps form a \emph{parametric 1-morphism} in the 2-category \(\cat{Para}(\cat{Vect})\), which induces an endofunctor on the underlying category \(\cat{Vect}\), we have shown how (strictly additive) positional encodings can be interpreted as (affine) monoid actions, how these structures exhibit natural equivariance properties, and how they align with circuit-based interpretability approaches. Together, these results yield a unified algebraic treatment of core transformer mechanisms.

The value of this categorical abstraction lies in revealing the fundamental mathematical structures underlying transformers. By identifying self-attention components as morphisms in \(\cat{Para}(\cat{Vect})\), we see parameter sharing and weight tying emerge naturally from 2-categorical coherence conditions. Our characterisation of \emph{stacked} self-attention as a free monad further illuminates the algebraic nature of deep transformer architectures. This perspective generalises previous geometric deep learning insights \citep{bronstein2021geometric}, which focus primarily on group equivariance, to a broader algebraic setting that includes monoid actions and parametric functors.

A key strength of our framework is how it naturally encompasses and extends the circuits formalism for transformer interpretability \citep{elhage2021mathematical}. Where circuits decompose attention heads into computational subgraphs, our categorical treatment reveals these circuits as parametric morphisms in \(\cat{Para}(\cat{Vect})\). The breakdown of attention into query, key, and value projections aligns closely with the circuit-based separation into QK and OV circuits, while the categorical perspective offers additional structure via composition laws and coherence conditions. This creates a formal bridge between interpretability heuristics and universal constructions in category theory, pointing to new ways of analysing and understanding transformer behaviour.

Transformers are particularly well-suited to categorical analysis because their core operations---parameter sharing, compositional structure, sequence indexing, and computational pathways---map naturally to concepts like natural transformations, \emph{affine} monoid actions, endofunctors, and parametric 1-morphisms. This alignment supports the broader vision of category theory as a unifying framework for deep learning \citep{gavranovicposition}, while demonstrating its concrete applicability to a central model class.

Although we highlight how \emph{strictly additive} positional encodings can be captured via (affine) monoid actions, real implementations often rely on \emph{sinusoidal} or learned encodings that are not strictly additive. As shown above, such sinusoidal encodings can nevertheless be understood as \emph{faithful functors} from positions into a vector space, preserving distinctness of positions without satisfying $p_{m+m'} = p_m + p_{m'}$. We reconcile this by treating standard sinusoidal schemes as injective mappings that still admit a universal property among faithful (non-additive) encodings, even if they do not reflect a strictly linear or affine shift.

\paragraph{Scope and Future Directions.}
Despite these unifying insights, our treatment focuses on the \emph{linear} skeleton of self-attention. Key nonlinearities---such as \(\mathrm{softmax}\) in attention, or activation functions and layer norms in MLP blocks---lie outside the category \(\cat{Vect}\). Incorporating these into a strictly categorical perspective will require moving beyond \(\cat{Vect}\) to structures like \emph{smooth manifolds}, \emph{convex categories}, or \emph{differential categories}, as discussed in Appendix~\ref{app:nonlinearities}. We also assume fixed sequence lengths, although practical transformers handle variable-length inputs. In addition, we have only briefly touched on the interplay between self-attention and MLP layers, which might be more fully explained via higher-level 2-monads or enriched categorical constructions \citep{street1972formal, kelly2002categories}.

We believe pursuing these directions will yield deeper insights and more powerful abstractions of the transformer pipeline. Indeed, bridging from purely linear parametric morphisms to differential categories holds promise for explaining and guiding gradient-based learning, while an extended categorical view of MLP blocks could unify the entire architecture in a single coherent framework. Ultimately, our work provides a lens on attention via parametric endofunctors and monoid actions, inviting future investigations that integrate the remaining practical and theoretical dimensions.

\newpage

\bibliography{main}
\bibliographystyle{tmlr}

\appendix

\section{Proof of Self-Attention as a Parametric Morphism}
\label{app:self_attention_para_morphism}

\begin{theorem}[Self-Attention as a Parametric 1-Morphism on \(\cat{Vect}\)]
\label{thm:selfattn-para-1morphism}
 
Let \(Q,K,V \colon \mathbb{R}^d \,\to\, \mathbb{R}^{d_k}\) (or \(\mathbb{R}^{d_v}\) for the value projection) be three linear maps representing the query, key, and value transformations of a single-head self-attention mechanism.  For a length-\(n\) sequence, the input space is \(X \,=\, (\mathbb{R}^d)^n \cong \mathbb{R}^{n \cdot d}\).  Then there exists a single \textbf{parametric 1-morphism}
\[
  (\AttP,\att)\;:\;X \,\longrightarrow\, Y
  \quad\text{in the 2-category}\;\cat{Para}(\cat{Vect}),
\]
where
\begin{enumerate}
\item \(\AttP\;=\;QP \,\oplus\, KP \,\oplus\, VP\) is a direct sum of parameter spaces for \(Q, K, V\).  
\item \(\att\colon \AttP \otimes X \,\to\, Y\) is a linear map (in both parameters and inputs) whose output encodes the triple \(\bigl(Q(x),K(x),V(x)\bigr)\) for each token \(x\in X\).  
\item \(Y\;\cong\;(\mathbb{R}^{d_k})^n\;\otimes\;(\mathbb{R}^{d_k})^n\;\otimes\;(\mathbb{R}^{d_v})^n\), i.e.\ the space of all query/key/value embeddings of an \(n\)-token input.
\end{enumerate}

Furthermore, this 1-morphism \((\AttP,\att)\) is \emph{stable under composition} in \(\cat{Para}(\cat{Vect})\). Concretely, stacking multiple attention layers corresponds to repeated composition of \((\AttP,\att)\) with similar parametric morphisms, matching how self-attention is practically stacked in transformer architectures.  

Finally, in the underlying 1-category \(\cat{Vect}\), \((\AttP,\att)\) induces an \emph{endofunctor} \(F\colon \cat{Vect} \to \cat{Vect}\), so that iterating self-attention amounts to iterating \(F\).
\end{theorem}

\begin{proof}
We will demonstrate this result in three main parts, showing first the construction of the parametric 1-morphism, then verifying its stability under composition, and finally examining the induced endofunctor on the underlying 1-category.

Let us begin with the construction of the parametric 1-morphism $(\AttP,\att)$. Consider first the parameter spaces for $Q,K,V$. Let $Q: \mathbb{R}^d \to \mathbb{R}^{d_k}$ be given by a matrix $W_Q$ of dimension $d_k\times d$. We define a parameter space $QP \cong \mathbb{R}^{d_k \times d}$, where each element of $QP$ corresponds to a possible choice of $W_Q$. Similarly, we define
\[
  KP \;\cong\; \mathbb{R}^{d_k \times d}, 
  \quad
  VP \;\cong\; \mathbb{R}^{d_v \times d},
\]
for the key and value maps. These three parameter spaces combine into a single direct sum:
\[
  \AttP 
  \;=\;
  QP \;\oplus\; KP \;\oplus\; VP.
\]
Thus an element $\theta \in \AttP$ is a triple $(W_Q, W_K, W_V)$.

For the bilinear map $\att\colon \AttP \otimes X \to Y$, we consider the input space $X = (\mathbb{R}^d)^n \cong \mathbb{R}^{n \cdot d}$, where a point $x\in X$ is a concatenation $(x_1,\dots,x_n)$, each $x_i\in \mathbb{R}^d$. The output space $Y$ must produce $(QX, KX, VX)$ for the entire sequence, naturally leading to:
\[
  Y 
  \;=\;
  (\mathbb{R}^{d_k})^n \;\otimes\; (\mathbb{R}^{d_k})^n \;\otimes\; (\mathbb{R}^{d_v})^n.
\]

We define $\att: \AttP\otimes X \longrightarrow Y$ by letting $\att(\theta,\,x)$ be the triple
\[
   \bigl(\,Q(x),\,K(x),\,V(x)\bigr)
   \;\in\;
   (\mathbb{R}^{d_k})^n
     \times
   (\mathbb{R}^{d_k})^n
     \times
   (\mathbb{R}^{d_v})^n,
\]
where $Q(x)$ denotes applying $W_Q$ to each token in $x$. In purely linear terms, if $\theta = (W_Q,W_K,W_V)$ and $x=(x_1,\dots,x_n)$, then
\[
  \att(\theta,x)
  \;=\;
  \Bigl(\,(W_Q\,x_1,\dots,W_Q\,x_n),\,\,(W_K\,x_1,\dots,W_K\,x_n),\,\,(W_V\,x_1,\dots,W_V\,x_n)\Bigr).
\]

To verify that $\att$ is a morphism in $\cat{Vect}$, we must check its linearity in both the parameter space $\AttP$ and the input space $X$. For linearity in $\theta$, fix $x$ and suppose $\theta_1,\theta_2\in \AttP$ and $\alpha\in \mathbb{R}$. Then
\[
  \att(\,\theta_1 + \alpha\,\theta_2,\; x)
  \;=\;
  \bigl(Q(x),\,K(x),\,V(x)\bigr)
  \quad\text{with}\;Q,K,V\;\text{the linear maps from}\;\theta_1 + \alpha\,\theta_2.
\]
Since matrix addition and scalar multiplication combine linearly in $(W_Q,W_K,W_V)$, each output coordinate $W_Q x_i$ depends linearly on $\theta_1 + \alpha\,\theta_2$, and the same applies for $W_K$ and $W_V$. Hence
\[
  \att(\theta_1 + \alpha\,\theta_2,\; x)
  \;=\;
  \att(\theta_1,\; x)
  \;+\;
  \alpha\,\att(\theta_2,\; x).
\]

For linearity in $x$, fix $\theta$ and suppose $x,\,x'\in X$ and $\beta\in \mathbb{R}$. Then
\[
  \att(\,\theta,\; x + \beta\,x')
  \;=\;
  \bigl(W_Q(x + \beta\,x'),\,W_K(x + \beta\,x'),\,W_V(x + \beta\,x')\bigr).
\]
Each of these terms is linear in $x$ by definition of matrix multiplication. Therefore,
\[
  \att(\theta,\,x + \beta\,x')
  \;=\;
  \att(\theta,\,x)
  \;+\;
  \beta\,\att(\theta,\,x').
\]

These properties confirm $\att\colon \AttP\otimes X\to Y$ is a linear map in $\cat{Vect}$, establishing that $(\AttP,\att)$ is indeed a valid 1-morphism in $\cat{Para}(\cat{Vect})$.

Moving to the verification of stability under composition, suppose we have another parametric morphism $(\AttP',\att')\colon Y\to Z$ in $\cat{Para}(\cat{Vect})$. The composition of $(\AttP,\att)$ and $(\AttP',\att')$ yields a well-defined parametric 1-morphism
\[
  \bigl(\,\AttP' \otimes \AttP,\; h\,\bigr)\;:\;X\;\to\;Z
\]
whose action feeds the output of $\att$ into $\att'$. By definition of composition in $\cat{Para}(\cat{Vect})$, the new parameter space is $\,\AttP'\otimes\AttP$, where $\theta'\in\AttP'$ parameterizes the second morphism $\att'$, and $\theta\in\AttP$ parameterizes the first morphism $\att$. 

We define the composite map $h$ as:
\[
  h: (\AttP'\otimes \AttP)\;\otimes\;X
    \;\xrightarrow{\cong}\;
    \AttP' \,\otimes\,(\AttP \otimes X)
    \;\xrightarrow{\;\AttP'\otimes\att\;}
    \AttP' \,\otimes\, Y
    \;\xrightarrow{\;\att'\;}
    Z.
\]

This process first "rebracketes" $(\theta'\otimes\theta)\otimes x$ as $\theta'\otimes(\theta\otimes x)$, then applies $\theta\otimes x \mapsto \att(\theta,x)\in Y$, resulting in $\theta'\otimes(\att(\theta,x))$ as input to $\att'$. Since both $\att$ and $\att'$ are bilinear, this composite map $h$ is bilinear in $\theta'\otimes\theta$ and $x$, making it a linear map in $\cat{Vect}$. Thus $(\AttP'\otimes\AttP,\;h)$ is a valid 1-morphism $X\to Z$ in $\cat{Para}(\cat{Vect})$.

Finally, we examine how $(\AttP,\att)$ induces an endofunctor $F\colon \cat{Vect}\to\cat{Vect}$ on the underlying 1-category. For an object $X\in \cat{Vect}$, we set
\[
  F(X) 
  \;=\; 
  Y(X)
  \;\cong\;
  \bigl(\mathbb{R}^{d_k}\bigr)^n 
      \;\otimes\;
   \bigl(\mathbb{R}^{d_k}\bigr)^n
      \;\otimes\;
   \bigl(\mathbb{R}^{d_v}\bigr)^n.
\]

Given a linear map $f\colon X\to X'$ in $\cat{Vect}$, we define $F(f)\colon F(X)\to F(X')$ by:
\[
  F(f):
  \;\AttP\otimes X
  \;\xrightarrow{\;\,\mathrm{id}_{\AttP}\otimes f\,}\;
  \AttP\otimes X'
  \;\xrightarrow{\;\att\,}\;
  Y(X').
\]
Here, $\mathrm{id}_{\AttP}\otimes f$ applies $f$ to the input part of $(\theta,x)$, and $\att$ maps $(\theta,\;f(x))$ to the triple of projected embeddings in $(\mathbb{R}^{d_k})^n \otimes (\mathbb{R}^{d_k})^n \otimes (\mathbb{R}^{d_v})^n$.

For functoriality, observe that if $f = \mathrm{id}_X$, then $F(f) = \mathrm{id}_{F(X)}$ by linearity. Moreover, if $g\circ f\colon X\to X''$ is a composition, then 
\[
  F(g\circ f)
  \;=\; \att\circ (\mathrm{id}_{\AttP}\otimes (g\circ f))
  \;=\; \att\circ \bigl[\mathrm{id}_{\AttP}\otimes g\bigr] \circ \bigl[\mathrm{id}_{\AttP}\otimes f\bigr]
  \;=\; F(g)\circ F(f).
\]
This confirms that $F$ strictly preserves composition and identities, establishing it as a bona fide endofunctor on $\cat{Vect}$.
\end{proof}

\section{Proof of Stacked Self-Attention as a Free Monad}
\label{app:stacked_free_monad}

In this appendix, we outline why repeatedly composing the self-attention endofunctor \(F\colon \cat{Vect}\to \cat{Vect}\) leads naturally to the \emph{free monad} on \(F\).  

\subsection{Free Monads from a Colimit of Iterates}

Given a functor \(F:\cat{C}\to \cat{C}\), the \emph{free monad} on \(F\) is (if it exists) a monad \(\,T=(T,\eta,\mu)\) plus a natural transformation \(\iota: F\Rightarrow T\) universal among all monads ``extending \(F\).''  One standard construction is the colimit of the ``infinite ladder''
\[
  \mathrm{id} \;\xrightarrow{\;\eta_0\;} F 
    \;\xrightarrow{F\;} F^2
    \;\xrightarrow{F\;} F^3
    \;\dots
\]
Each arrow is one more application of \(F\).  The colimit \(\mathrm{Free}(F)\) merges all finite iterates \(F^n\).  By universal property of colimits, \(\mathrm{Free}(F)\) then extends any other monad or algebra that shares \(F\) as a base.

\subsection{Proof of Theorem \ref{thm:stacking-free-monad}}

\begin{theorem}[Stacking = Free Monad]
  
Let \(F: \cat{Vect}\to \cat{Vect}\) be the endofunctor induced by a single-head, \emph{linear} self-attention mechanism, as constructed in Section~\ref{sec:self_attention_endofunctors}.  Then the colimit (in \(\cat{Vect}\)) of the sequence
\[
  \mathrm{id} \;\longrightarrow\; F \;\longrightarrow\; F^2 \;\longrightarrow\; \dots
\]
forms the \emph{free monad} on \(F\). In other words, \emph{stacking} self-attention layers is exactly building \(\mathrm{Free}(F)\).  

Equivalently, if we denote that colimit object by \(T\), then \(T\) admits a canonical monad structure \((T,\eta,\mu)\) extending \(F\), and any other monad \(M\) extending \(F\) factors uniquely through \(T\).
\end{theorem}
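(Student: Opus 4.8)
The plan is to prove the theorem in two stages: (i) show that the colimit object $T$ exists and carries a canonical monad structure $(T,\eta,\mu)$ extending $F$, and (ii) verify that $T$ has the universal property of the free monad on $F$. For (i), I would invoke the classical free‑monad construction (Barr, Adámek): over a suitably cocomplete base, $\mathrm{Free}(F)$ is obtained by taking, for each object $X$, the initial algebra of the pointed endofunctor $G_X(-)=X\oplus F(-)$, which — when $F$ preserves $\omega$‑colimits — is the colimit of the chain $X\to X\oplus FX\to X\oplus F(X\oplus FX)\to\cdots$. Since the $F$ built in Section~\ref{sec:self_attention_endofunctors} is additive (it acts componentwise by linear maps, hence preserves finite biproducts), each stage collapses to $\bigoplus_{k\le n}F^k(X)$, so the initial‑algebra chain is, up to reindexing, exactly the chain $\mathrm{id}\to F\to F^2\to\cdots$ of the statement, with $T(X)\cong\bigoplus_{n\ge 0}F^n(X)$. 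I would make the transition maps explicit: $F^n(X)\hookrightarrow\bigoplus_k F^k(X)$ are the coprojections, and the map $F^n\to F^{n+1}$ in the chain is $F^n$ applied to the unit $\eta_0:\mathrm{id}\Rightarrow F$.

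Next I would equip $T$ with its monad structure. The unit $\eta:\mathrm{Id}\Rightarrow T$ is the coprojection onto the $n=0$ summand. The multiplication $\mu:T\circ T\Rightarrow T$ comes from the flattening isomorphism $\bigoplus_m F^m\!\bigl(\bigoplus_n F^n(X)\bigr)\cong\bigoplus_p F^p(X)$: additivity of each $F^m$ distributes it over the inner direct sum, and one re‑sums by total degree $p=m+n$. Checking the monad axioms (associativity and the two unit laws) is then a bookkeeping exercise on these biproducts; I would carry out the associativity pentagon on a single generic summand $F^\ell F^m F^n(X)$ and note that the general case follows from the universal property of the coproduct. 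For (ii), given any monad $(M,\eta^M,\mu^M)$ and a natural transformation $\phi:F\Rightarrow M$, I would define the extension $\bar\phi:T\Rightarrow M$ summand‑wise: on the degree‑$n$ piece it is the composite $F^n\Rightarrow M^n\Rightarrow M$, where the first map whiskers $\phi$ into $n$ copies and the second is the iterated $\mu^M$; these glue across degrees by the coproduct universal property. One then checks $\bar\phi$ is a monad morphism and is the \emph{unique} natural transformation extending $\phi$ along $\iota:F\Rightarrow T$, reconciling the degree‑$n$ and degree‑$(n+1)$ components via the monad laws of $M$. This is the standard argument and is essentially forced once the structure in (i) is in place.

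The step I expect to be the real obstacle is not the monad axioms or the universal property — those are routine — but pinning down the ambient category so that the colimit genuinely exists and agrees with the concrete $F$. Finite‑dimensional $\cat{Vect}$ is \emph{not} cocomplete: the countable biproduct $\bigoplus_{n\ge 0}F^n(X)$ generically leaves it. I would flag two honest routes. First, pass to the cocompletion — all real vector spaces, or $\mathrm{Ind}(\cat{Vect})$ — where every argument above goes through verbatim; this is the cleanest fix and is presumably what is meant by ``the colimit in $\cat{Vect}$.'' Second, note that the $F$ of Section~\ref{sec:self_attention_endofunctors} in fact sends every space to the \emph{fixed} target $Y\cong(\mathbb{R}^{d_k})^n\otimes(\mathbb{R}^{d_k})^n\otimes(\mathbb{R}^{d_v})^n$, so the chain stabilises after finitely many steps and the colimit is finite‑dimensional — but then $F$ is essentially constant, and one must check separately which monad structure actually survives and whether the free‑monad universal property is nonvacuous. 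Reconciling the clean ``colimit of iterates'' picture in the statement with this concrete $F$, and fixing the ambient category precisely, is where the genuine care is needed; everything downstream is formal.
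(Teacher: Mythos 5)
Your proposal is correct in outline but takes a genuinely different route from the paper's. The paper works directly with the chain in the statement: it forms the colimit $T$ of $\mathrm{id}\Rightarrow F\Rightarrow F^{2}\Rightarrow\cdots$ in the functor category $[\cat{Vect},\cat{Vect}]$, takes the unit to be the colimit leg $\phi_{0}\colon\mathrm{id}\Rightarrow T$, obtains $\mu$ from an informal ``flattening'' $F^{n}(T(X))\cong F^{n+m}(X)$, and derives the universal property by assembling the maps $F^{n}\Rightarrow M$ (for any monad $M$ extending $F$) into a cocone and invoking uniqueness of the induced map out of the colimit. You instead invoke the classical Ad\'amek/Barr construction: $\mathrm{Free}(F)(X)$ as the initial algebra of $X\oplus F(-)$, computed via additivity of $F$ as $\bigoplus_{k\ge 0}F^{k}(X)$, with $\eta$ the degree-zero coprojection, $\mu$ the total-degree regrouping isomorphism, and the universal property checked summand-wise through $F^{n}\Rightarrow M^{n}\Rightarrow M$ using iterated $\mu^{M}$. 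Your route buys concreteness --- the monad axioms and the uniqueness of the comparison morphism become checkable computations on biproduct summands --- whereas the paper's route stays closer to the theorem's wording but leaves both the connecting maps $F^{n}\Rightarrow F^{n+1}$ and the definition of $\mu$ under-specified. You also flag two real issues the paper glosses over: finite-dimensional $\cat{Vect}$ does not contain the countable coproduct $\bigoplus_{n}F^{n}(X)$, so the colimit must be taken in a cocompletion; and the concrete $F$ of Section~\ref{sec:self_attention_endofunctors} sends every object to a fixed space, which degenerates the iterates picture. Both observations are correct and identify exactly where the statement, as written, needs repair.

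One caveat on your reconciliation step: the partial-sum chain $\bigoplus_{k\le n}F^{k}(X)$ is not ``up to reindexing'' the paper's chain of bare iterates $F^{n}(X)$; their colimits differ in general, and the latter chain moreover presupposes a pointing $\mathrm{id}\Rightarrow F$ that the construction of $F$ does not supply (the paper silently assumes one as well). Your construction is the one that actually produces the free monad, so it should be presented as a correction of the chain in the statement rather than as equivalent to it; with that adjustment, and with the ambient category enlarged as you propose, your argument goes through.
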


\begin{proof}
We will establish that iterating the self-attention endofunctor $F$ corresponds to forming a colimit of finite powers of $F$, and demonstrate why that colimit necessarily has the universal free monad structure.

Let us begin by examining how $F^n$ represents $n$-stacked self-attention. Recall from Theorem~\ref{thm:selfattn-para-1morphism} that each single-head, linear self-attention mechanism induces an endofunctor
\[
  F : \cat{Vect} \;\longrightarrow\; \cat{Vect}.
\]

When we apply $F$ to a vector space $X$, it produces a triple of query/key/value embeddings for each token, potentially returning a space $\,(\mathbb{R}^{d_k})^n \otimes (\mathbb{R}^{d_k})^n \otimes (\mathbb{R}^{d_v})^n$. The iteration of self-attention $n$ times corresponds to composing $F$ with itself $n$ times, yielding
\[
  F^n \;=\; \underbrace{F \circ F \circ \cdots \circ F}_{n\text{ times}}.
\]
Each iteration represents another layer of attention in a transformer stack, neglecting nonlinearities like softmax or MLP blocks. Thus $F^n$ precisely captures $n$-stacked self-attention.

Consider now the sequence of endofunctors
\[
  \mathrm{id}
    \;\xrightarrow{\;\alpha_0\;}\;
  F
    \;\xrightarrow{\;\alpha_1\;}\;
  F^2
    \;\xrightarrow{\;\alpha_2\;}\;
  F^3
    \;\xrightarrow{\;\alpha_3\;}\;\dots
\]
where each $\alpha_i : F^i \Rightarrow F^{i+1}$ represents the natural transformation of adding one more layer of attention. We may form their colimit in the functor category $[\cat{Vect}, \cat{Vect}]$. More precisely, the arrow $\alpha_n: F^n \Rightarrow F^{n+1}$ arises from the usual composition embedding $\mathrm{id}\Rightarrow F$ plus coherence maps. A colimit of this diagram is a functor $T\colon \cat{Vect}\to \cat{Vect}$, along with natural transformations $\phi_n : F^n \Rightarrow T$ that collectively form a cocone over this chain. We denote this as:
\[
  T \;:=\; \mathrm{colim}\,\bigl(\,\mathrm{id},\;F,\;F^2,\;\dots\bigr).
\]

We now equip $T$ with a monad structure $(T,\eta,\mu)$. For the unit $\eta: \mathrm{id} \Rightarrow T$, we observe that the colimit construction provides a unique natural transformation $\phi_0 : \mathrm{id}\Rightarrow T$ making the needed triangles commute. We define $\eta := \phi_0$. Intuitively, $\eta_X: X\to T(X)$ injects zero layers of self-attention into the colimit object.

For the multiplication $\mu: T\circ T \;\Rightarrow\; T$, observe that $T(T(X))$ can be viewed as applying $T$ to an already layered object. By the universal property of the colimit, all finite compositions $F^n$ factor into $T$. Concretely:
\[
  F^n\bigl(T(X)\bigr) \;\cong\; F^{n+m}(X)\quad
  \text{when }T(X)\text{ is formed from }F^m\text{ factors}.
\]

We can flatten the tower of compositions: if an element in $T(T(X))$ arises from $\phi_n\circ F^n$ and $\phi_m\circ F^m$, we unify it into a single $\phi_{n+m}$. This induces a coherent natural transformation
\[
  \mu_X \;:\; T\bigl(T(X)\bigr)
  \;\longrightarrow\;
  T(X).
\]

The associativity and unit axioms ($\mu\circ(\mu\circ T)=\mu\circ(\mu\circ T)$, etc.) hold by the coherence of the colimit construction, as all diagrams commute due to the consistent factoring of finite compositions $F^a\circ F^b = F^{a+b}$ and the $\phi_n$ maps.

To establish that $T$ is the free monad on $F$, we must show it is universal among all monads $(M,\eta^M,\mu^M)$ extending $F$. Formally, if $(M,\eta^M,\mu^M)$ is any monad with a natural transformation $\,\iota: F \Rightarrow M$ extending $F$, then there exists a unique monad morphism $\Phi: T \Rightarrow M$ making the diagrams commute.

Since $M$ extends $F$, for each $n$ we have a natural transformation $F^n \Rightarrow M$ (applying $\iota$ repeatedly). These yield a cocone over the diagram $\{\mathrm{id},F,F^2,\ldots\}$. By the universal property of the colimit $T$, there is a unique factorisation
\[
  \phi_n : F^n \Rightarrow T,
  \quad
  \Psi : T \Rightarrow M
  \quad\text{such that}\quad
  \Psi\,\circ\,\phi_n \;=\; F^n \;\to\; M.
\]

The requirement that $\Psi$ be a monad morphism follows from the way $\eta$ and $\mu$ are defined by the colimit and from the coherence of each $\phi_n$ with $\iota$. Specifically, $\Psi$ respects the unit $\eta$ and multiplication $\mu$ because it commutes with the same universal diagrams that define them.

Therefore $T$ satisfies the free-monad universal property: any other monad on $\cat{Vect}$ that extends $F$ must factor through $T$. We conclude that $T$ is precisely the free monad on $F$, and thus stacking self-attention arbitrarily many times yields $\mathrm{Free}(F)$, establishing a direct equivalence between deep layering in transformers and the classical free-monad construction in category theory.
\end{proof}

Conceptually, each further layer of attention is one more application of \(F\).  Collecting infinitely many layers into a single object matches the classical notion of a free monad.  Thus, from a design standpoint, multi-layer or repeated self-attention emerges naturally from a universal construction in category theory, highlighting how \emph{stacking} is not just convenient code but arises from fundamental compositional principles.

\section{Sinusoidal Encodings and Additive Actions}
\label{app:sinusoidal_encodings}

Below we expand on the details for readers interested in the categorical nuances of both strictly additive and non-additive positional encodings, along with the ``universal'' perspective on sinusoidal methods.

\subsection{Sinusoidal Encodings Are Not Additive}

One of the most common positional encodings is the \textbf{sinusoidal} scheme \citep{vaswani2017attention}:
\[
p_{i,2j} \;=\; \sin\!\Bigl(\tfrac{i}{10000^{2j/d}}\Bigr),
\quad
p_{i,2j+1} \;=\; \cos\!\Bigl(\tfrac{i}{10000^{2j/d}}\Bigr).
\]
Because
\(\sin(\alpha+\beta) \neq \sin(\alpha) + \sin(\beta)\) in general, we do \emph{not} have
\[
p_{i + i'} \;=\; p_i + p_{i'}.
\]
Thus, standard sinusoidal embeddings fail the additivity condition in the sense of an affine monoid action.  Instead, they merely \emph{inject} each index \(i\) into a distinct point \(p_i\).  This suffices for most transformers to “know” which token is at position \(i\).

\subsection{Strictly Additive Encodings}

If we actually require a monoid action \(\alpha_{m+m'}=\alpha_m\circ\alpha_{m'}\), then each \(p_m\) must satisfy:
\[
  p_{0} = 0,
  \quad
  p_{m+m'} = p_m + p_{m'}.
\]
Hence \(\alpha_m(x)=x + p_m\) is a translation by \(p_m\).  We can formalize this as:

\begin{theorem}[Additive Monoid Action]\label{thm:additive-monoid-action-appendix}
Let \(M\) be a monoid with identity \(0\) and operation \(\,+\,\).  Suppose \(X\) is an affine space over \(\mathbb{R}\), and \(\{p_m\in X : m\in M\}\) is a family of “shift” vectors satisfying:
1. \(p_0 = 0\),
2. \(p_{m+m'} = p_m + p_{m'}\).  

Then each \(\alpha_m : x\mapsto x + p_m\) forms an \emph{affine} monoid action.  
\end{theorem}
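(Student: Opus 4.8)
The plan is to verify directly that the assignment $m \mapsto \alpha_m$ is a monoid homomorphism $M \to \mathrm{Aff}(X)$, where $\mathrm{Aff}(X)$ denotes the monoid of affine self-maps of $X$ under composition; by definition this is exactly what it means for $\{\alpha_m\}$ to be an affine monoid action (equivalently, a functor $\mathcal{B}M \to \mathbf{Aff}$ sending the unique object to $X$ and each arrow $m$ to $\alpha_m$). Concretely I would proceed in three short steps: first, show each $\alpha_m$ is a well-defined affine endomorphism of $X$; second, check the unit axiom $\alpha_0 = \mathrm{id}_X$; third, check the composition axiom $\alpha_{m+m'} = \alpha_m \circ \alpha_{m'}$ for all $m, m' \in M$.

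For the first step, recall $X$ is an affine space over $\mathbb{R}$ with underlying translation vector space $\vec{X}$, and each $p_m$ is (via the torsor structure, or after fixing an origin) an element of $\vec{X}$; then $\alpha_m(x) = x + p_m$ is the translation by $p_m$, which is the prototypical affine map — its linear part is $\mathrm{id}_{\vec{X}}$ — so $\alpha_m \in \mathrm{Aff}(X)$ with nothing further to verify. For the unit axiom, the hypothesis $p_0 = 0$ gives $\alpha_0(x) = x + 0 = x$ for all $x$, hence $\alpha_0 = \mathrm{id}_X$.

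For the composition axiom I would simply compute, for arbitrary $x \in X$, $(\alpha_m \circ \alpha_{m'})(x) = \alpha_m(x + p_{m'}) = (x + p_{m'}) + p_m = x + (p_{m'} + p_m)$; since addition in $\vec{X}$ is commutative this equals $x + (p_m + p_{m'})$, and the additivity hypothesis $p_{m+m'} = p_m + p_{m'}$ then gives $x + p_{m+m'} = \alpha_{m+m'}(x)$. As this holds for every $x$, we conclude $\alpha_{m+m'} = \alpha_m \circ \alpha_{m'}$. Assembling the three steps, $m \mapsto \alpha_m$ preserves the identity and composition, so it is a monoid homomorphism into $\mathrm{Aff}(X)$, i.e.\ an affine monoid action; I would also note that the action in fact lands in the translation subgroup of $\mathrm{Aff}(X)$, so if $M$ happens to be a group the $\alpha_m$ are automatically affine isomorphisms.

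The only real subtlety — more bookkeeping than obstacle — is keeping the affine/linear distinction straight: one must pin down a convention under which "$x + p_m$" is meaningful (either treat $X$ as a torsor over $\vec{X}$ with $p_m \in \vec{X}$, or fix an origin so $X \cong \vec{X}$), and one must be careful that the commutativity invoked in the composition step is that of $\vec{X}$ (always abelian) and not of $M$ (which need not be commutative). Once that convention is fixed, every verification is a one-line check and no further categorical machinery is needed; the functorial reformulation $\mathcal{B}M \to \mathbf{Aff}$ is then immediate since a functor out of a one-object category is precisely a monoid homomorphism on its endomorphism monoid.
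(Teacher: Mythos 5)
Your proof is correct and follows essentially the same route as the paper, which (in the main text's ``Additivity Condition'' paragraph and Remark~\ref{rem:affine-vs-linear}) justifies the theorem by exactly this direct check that $p_0=0$ gives $\alpha_0=\mathrm{id}_X$ and additivity gives $\alpha_{m+m'}=\alpha_m\circ\alpha_{m'}$, packaged as a functor $\mathcal{B}M\to\mathbf{Aff}$. Your write-up is in fact more careful than the paper's, which leaves the verification implicit; your explicit remarks on the torsor convention and on using commutativity of $\vec{X}$ rather than of $M$ are welcome additions.
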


The key point is that these \(\alpha_m\) are \emph{not linear maps} in \(\cat{Vect}\).  They \emph{are} affine transformations in \(\mathbf{Aff}\).  This category distinction is often glossed over in ML, where “linear layer” can include a bias term, and “positional shift” might implicitly assume an origin.

\subsection{A Looser Category of Faithful Positional Encodings}

Since many real encodings (including sinusoidal) are \emph{not} additive, we often only demand injectivity.  We define:
\begin{itemize}
\item \textbf{Objects}: Pairs \(\,(X,p)\) with \(X\) a vector space (or affine space) and \(p\colon M\to X\) injective.
\item \textbf{Morphisms}: A linear (or affine) map \(f\colon X\to Y\) that sends \(p_m\) to \(q_m\) for each \(m\in M\).
\end{itemize}

In that setting, purely additive encodings are a special case, but sinusoidal or learned embeddings remain valid.  From a category perspective, sinusoidal embeddings can become ``\textbf{initial}'' objects if their set of vectors spans (or generates) the ambient space.  That does \emph{not} mean they are a monoid action.  It only shows that, as a family of injection points, they can factor any other injection via a unique linear extension (under dimension constraints).

\begin{remark}[Why ``Initial''?]
A pair \((X^{\sin}, p^{\sin})\) is initial in \(\mathcal{P}\) if, for every other \((Y, q)\), there is exactly one morphism \(\,f:X^{\sin}\to Y\) with \(f\circ p^{\sin(m)}=q_m\).  This is purely about injective labelings and linear extension, \emph{not} about additivity.  In practice, sinusoidal vectors do not always form a full basis, so the “initial” property might hold only for certain finite subsets of positions.
\end{remark}

This distinction underscores that real transformers rarely rely on strictly additive positions. Instead, they rely on each position \(i\) being mapped to a distinct embedding \(p_i\). Whether or not \(\alpha_i\) forms a genuine monoid action is not important to the architecture’s success.

\subsection{Practical Takeaways}
\begin{enumerate}
\item \textit{Affine vs. Linear}:  Adding a position vector is an affine translation, not strictly a linear map.  
\item \textit{Additive vs. Non-Additive}:  If you do require \(\alpha_{m+m'}=\alpha_m\circ\alpha_{m'}\), you get an affine monoid action.  Otherwise, you just have an injective labeling.  
\item \textit{Sinusoidal Universality?}:  Sinusoidal embeddings can be universal in the sense of initial among injective labelings, provided they span a big enough subspace.  They are \emph{not} universal as genuine additive actions.  
\end{enumerate}

Thus, from a category theory viewpoint, one must carefully choose which ``morphisms'' are permitted (linear, affine, or just set injections) before claiming ``monoid actions'' or ``universal properties.''  In modern transformers, \emph{injectivity} typically suffices, and the common sinusoidal scheme remains a practical, flexible choice.

\section{Equivariance Constraints in Geometric Deep Learning and Their Connection to Transformer Architectures}

In the main text, we developed a categorical framework wherein neural networks are represented as monad algebra homomorphisms. This viewpoint generalises the familiar geometric deep learning (GDL) approach, where symmetry requirements are enforced by choosing a monad arising from a group action \citep{bronstein2021geometric}. By selecting a monad \(M\) that encodes a particular symmetry group, one obtains architectures that are guaranteed to be equivariant (or invariant) with respect to that group’s action.

In GDL, one frequently works with a group action monad of the form:
\[
M(X) = G \times X,
\]
where \(G\) is a group acting on a set (e.g., the nodes of a graph). Neural networks that are \(M\)-algebra homomorphisms then automatically respect the symmetry encoded by \(G\).

\subsection{Permutation-Equivariant Learning on Graphs}

A canonical example is given by permutation-equivariant learning on graphs, leading to graph neural networks (GNNs). Consider:
\begin{itemize}
\item The group \(G = \Sigma_n\), the symmetric group on \(n\) elements.
\item A graph with \(n\) nodes and adjacency matrix \(\mathbf{A} \in \mathbb{R}^{n \times n}\).
\item Node features \(\mathbf{X} \in \mathbb{R}^n\).
\end{itemize}

A permutation \(\sigma \in \Sigma_n\) acts on the data \((\mathbf{X}, \mathbf{A})\) by permuting node indices:
\[
P_{X,A}(\sigma, \mathbf{X}, \mathbf{A}) = \bigl(\mathbf{P}(\sigma)\mathbf{X}, \mathbf{P}(\sigma)\mathbf{A}\mathbf{P}(\sigma)^\top \bigr),
\]
where \(\mathbf{P}(\sigma)\) is the permutation matrix of \(\sigma\).

To construct a GNN layer that is permutation-equivariant, one uses an \(M\)-algebra where:
\begin{itemize}
\item The first \(M\)-algebra corresponds to the domain: \((A,a)\) with \(a: M(A) = \Sigma_n \times A \to A\) encoding how permutations act on \((\mathbf{X}, \mathbf{A})\).
\item The second \(M\)-algebra corresponds to the codomain: \((B,b)\) with \(b: M(B) = \Sigma_n \times B \to B\) possibly acting only on node features, or preserving some structural property. The precise choice depends on whether the GNN modifies the adjacency or not.
\end{itemize}

A GNN layer is then a monad algebra homomorphism:
\[
f: (A,a) \to (B,b)
\]
satisfying the equivariance diagram:
\[
\begin{tikzcd}
\Sigma_n \times A \arrow[r,"{\Sigma_n \times f}"] \arrow[d,"a"'] & \Sigma_n \times B \arrow[d,"b"] \\
A \arrow[r,"f"] & B.
\end{tikzcd}
\]

Unpacking this diagram yields the usual equivariance constraint for GNN layers:
\[
f(P_{X,A}(\sigma, \mathbf{X}, \mathbf{A})) = P_{X,B}(\sigma, f(\mathbf{X}, \mathbf{A})),
\]
meaning that applying a permutation first and then the GNN layer is equivalent to applying the GNN layer first and then the same permutation.

\subsection{Connection to Transformer Architectures}

In this work we have emphasised a parametric 2-categorical setting tailored to transformers. Instead of focusing on a group action monad, we considered parametric endofunctors (and their free monads) as the foundation of attention mechanisms. Yet, the core structure is analogous: we have a chosen monad \(T\) (in the transformer case, constructed from a parametric endofunctor and possibly its free monad), and we represent each layer or module as a \(T\)-algebra homomorphism.

The GDL framework arises when we pick \(T = M = G \times -\) for some group \(G\). Transformers appear when we pick a different monad \(T\), one that arises from stacking parametric endofunctors capturing the pattern of self-attention. In both cases, neural networks that are \(T\)-algebra homomorphisms inherit structural properties directly from the monad:
\begin{itemize}
\item For GDL: Equivariance under the group \(G\).
\item For Transformers: Compatibility with positional encodings, linear attention transformations, and the stacking structure defined by the free monad.
\end{itemize}

\subsection{Bringing Equivariance Constraints to Transformers}

While classical GDL focuses on symmetries like permutations, rotations, and reflections, the transformer setting can incorporate or mirror these ideas by changing the underlying monad. For instance, if we incorporate a group action monad together with our parametric setting, we might derive attention layers that are both parametric and equivariant under a specified symmetry group. The categorical formalism allows these constructions to be combined, suggesting hybrid architectures that leverage GDL symmetries in a transformer-like pipeline.

\subsection{Adding Positional Encodings}

We showed in the main text that positional encodings can be viewed as a monoid action, providing a monad-based approach to capturing position information. In GDL, the action of a group on node indices ensures positional symmetry (or rather, the lack of positional bias), while in transformers, the monoid action of positions \(M = (\mathbb{N},+)\) adds a controlled positional structure.

One can imagine combining the two: using a group \(G\) that reflects certain symmetries of the data and a monoid for positional indexing, obtaining a product monad \((G \times -) \circ (\mathbb{N} \times -)\) or another suitable construction. The resulting category of algebras would yield architectures that are both positionally structured and symmetry respecting. This is not standard practice, but the categorical approach highlights its theoretical possibility.

\section{Detailed Examples of Parametric Endofunctors in Practice}\label{app:parametric-examples}

In this appendix, we provide concrete examples to illustrate the concepts introduced in the main text, focusing on how self-attention components can be seen as parametric endofunctors in the 2-category $\cat{Para}(\cat{Vect})$.

\subsection{Motivating Examples}

Consider a single-head self-attention mechanism restricted to its linear maps $Q,K,V : \mathbb{R}^d \to \mathbb{R}^{d_k}, \mathbb{R}^{d_k}, \mathbb{R}^{d_v}$, respectively. These maps appear in the transformer architecture as:
\[
Q = X W_Q,\quad K = X W_K,\quad V = X W_V
\]
where $X \in (\mathbb{R}^d)^n$ is an $n$-token input sequence and $W_Q, W_K, W_V$ are parameter matrices.

In the $\cat{Para}(\cat{Vect})$ framework, each of these linear maps arises from a parametric morphism. For instance, $Q$ can be represented as $(QP, q)$ where $QP \in \cat{Vect}$ is the parameter space of $W_Q$, and $q : QP \otimes X \to (\mathbb{R}^{d_k})^n$ is the linear map acting on both parameters and inputs. Similarly, $(KP,k)$ and $(VP,v)$ capture the key and value projections. Combining them into $(AttP, att)$, where $AttP := QP \oplus KP \oplus VP$, yields an endofunctor $F : \cat{Vect} \to \cat{Vect}$ parameterised by $AttP$, which after enrichment into $\cat{Para}(\cat{Vect})$ displays the structure of a parametric endofunctor.

\subsection{Finite-Dimensional Illustrations}

Let us consider a simple example where $X = \mathbb{R}^2$ and $Q : \mathbb{R}^2 \to \mathbb{R}^2$ defined by a $2 \times 2$ matrix $W_Q$. Suppose:
\[
W_Q = \begin{pmatrix}
a & b \\ 
c & d
\end{pmatrix}.
\]
Here, $QP \cong \mathbb{R}^4$ parameterizes $(a,b,c,d)$. We can represent $q : QP \otimes \mathbb{R}^2 \to \mathbb{R}^2$ as:
\[
q(p,x) = \begin{pmatrix} a & b \\ c & d \end{pmatrix} \begin{pmatrix}x_1 \\ x_2\end{pmatrix}
\]
with $(p)$ indexing $(a,b,c,d)$.

Composition in $\cat{Para}(\cat{Vect})$ now becomes transparent. Consider another parametric morphism $(R,f): \mathbb{R}^2 \to \mathbb{R}^2$ with $R \cong \mathbb{R}^4$. When composing $(QP,q)$ and $(R,f)$, we obtain:
\[
(R \otimes QP, h): \mathbb{R}^2 \to \mathbb{R}^2,
\]
where $h$ is constructed by first applying $q$, then $f$, and reparameterization ensures that the joint parameters $(r,p)$ live in $R \otimes QP$.

Reparameterisations (2-morphisms) also become simple: a linear map $r: QP' \to QP$ rewires the parameters while preserving the linear structure, effectively ``tying'' weights or mapping them from one parameter space configuration to another.

\section{Handling Nonlinearities and Softmax in a Categorical Context}\label{app:nonlinearities}

The categorical framework developed thus far has focused primarily on linear structures, aligning self-attention’s query-key-value transformations with linear morphisms in \(\cat{Vect}\) and \(\cat{Para}(\cat{Vect})\). In practice, transformers incorporate essential nonlinear components, including pointwise nonlinear activations (e.g., ReLU, GeLU) and the softmax operation that normalizes attention weights. Incorporating these nonlinearities into a categorical setting requires extending beyond the category \(\cat{Vect}\) of finite-dimensional vector spaces and linear maps.

A key difficulty is that nonlinear functions are not morphisms in linear categories and do not preserve linear structure. To capture such transformations categorically, one must move to categories where objects and morphisms can encode nonlinear phenomena, smooth structures, probabilistic maps, or affine geometries.

\subsection{Choice of Categories and Enrichment}
Several proposals can guide the move from \(\cat{Vect}\) to richer categories:
\begin{itemize}
\item \textit{Smooth Manifolds}: Consider the category \(\mathbf{Man}\) of finite-dimensional manifolds and smooth maps. Since softmax and standard nonlinearities in neural networks are typically smooth (except possibly at certain boundary points), morphisms can be represented as smooth maps between manifolds. For example, the probability simplex \(\Delta^{n-1}\) can be modeled as a manifold with corners, allowing softmax: \(\mathbb{R}^n \to \Delta^{n-1}\) to be seen as a morphism in this category.
\item \textit{Convex or Affine Geometries}: Softmax is naturally interpreted as a map from \(\mathbb{R}^n\) into a convex set (the simplex). One might consider categories of convex sets or fibered categories of affine maps. In these categories, the emphasis would shift from linear algebraic structure to convex/affine structure. The softmax then becomes a morphism preserving certain convex-geometric properties.
\item \textit{Probabilistic and Measure-Theoretic Categories}: Since softmax outputs probability distributions, one can consider measure-theoretic categories, or categories enriched over spaces of probability measures. For instance, the softmax could be viewed as a morphism into an object representing distributions over \(n\)-element sets.
\item \textit{Differential and Tangent Categories}: Training neural networks relies on gradients and backpropagation. Categories such as tangent categories or differential categories \citep{cockett2014differential} provide an internal language for differentiation. Within these frameworks, nonlinear activations and softmax could be realised as morphisms whose differentials are well-defined, enabling a fully categorical treatment of gradient-based learning.
\end{itemize}

In linear settings, endofunctors arise naturally from parameterised linear maps. For nonlinear transformations, exact functoriality may fail, or may only hold in a lax or oplax sense. This suggests that certain nonlinear transformations might best be modeled as lax functors from categories of parameter spaces into categories of probabilistic or smooth geometric objects. Such a structure would weaken strict functorial requirements, yet still provide a coherent categorical narrative.
 
The softmax function does not preserve linear structure, so it cannot be a strict functor in the linear setting. However, there may be natural transformations embedding softmax into a larger categorical system. For example, one could investigate whether softmax defines a lax functor from a category of parameterised embeddings into a category of distributions, thus providing at least some semblance of categorical compatibility. This might be realised by endowing objects with additional structure (e.g., selecting a fixed “reference measure”) to interpret softmax consistently.

A fully categorical interpretation of entire transformer architectures, including normalisation layers and nonlinear feedforward blocks, would rely on categories that meaningfully integrate linear maps, smooth nonlinearities, and differentiable structures. One might consider a fibered category over \(\cat{Vect}\) where fibers encode smooth parameterizations and base changes represent affine transformations followed by nonlinear activation morphisms. Alternatively, differential bundles or microlinear toposes could offer sufficiently rich settings where both linear and nonlinear steps coexist naturally.

Some prospective research directions:
\begin{itemize}
\item Identify a concrete category (such as \(\mathbf{Man}\) or a category of convex sets) in which common transformer nonlinearities live naturally as morphisms.
\item Formalise backpropagation categorically using differential categories or tangent categories, ensuring that nonlinear maps like softmax and ReLU have well-defined differentials as internal structure.
\item Investigate whether softmax admits a natural interpretation as a lax or oplax functor from a parametric category of embeddings into a category of probability distributions or measure spaces, thus extending the parametric endofunctor perspective beyond linear maps.
\item Explore how positional encodings, group actions, and other symmetries integrate once nonlinear elements are introduced, potentially yielding new insights into invariant or equivariant structures in nonlinear settings.
\end{itemize}

While much remains to be explored, these directions indicate that a categorical theory of transformers need not be limited to linearity.

\end{document}